\definecolor{somecolor}{RGB}{178,24,43}
\definecolor{perfblue}{RGB}{64, 114, 175}
\title{Diffusing States and Matching Scores: \\A New Framework for Imitation Learning}
\author{Runzhe Wu \\
Cornell University\\
\texttt{rw646@cornell.edu} \\
\And
Yiding Chen \\
Cornell University\\
\texttt{yc2773@cornell.edu} \\
\And
Gokul Swamy \\
Carnegie Mellon University\\
\texttt{gswamy@andrew.cmu.edu} \\
\And
Kianté Brantley \\
Harvard University\\
\texttt{kdbrantley@g.harvard.edu} \\
\And
Wen Sun \\
Cornell University\\
\texttt{ws455@cornell.edu} \\
}
\Crefname{ALC@unique}{Line}{Lines}
\crefname{assumption}{assumption}{assumptions}
\def\algname{\texttt{SMILING}}
\begin{document}

\maketitle

\begin{abstract} 
    Adversarial Imitation Learning is traditionally framed as a two-player zero-sum game between a learner and an adversarially chosen cost function, and can therefore be thought of as the sequential generalization of a Generative Adversarial Network (GAN). However, in recent years, diffusion models have emerged as a non-adversarial alternative to GANs that merely require training a score function via regression, yet produce generations of higher quality. In response, we investigate how to lift insights from diffusion modeling to the sequential setting. We propose diffusing states and performing \textit{score-matching} along diffused states to measure the discrepancy between the expert's and learner's states. Thus, our approach only requires training score functions to predict noises via standard regression, making it significantly easier and more stable to train than adversarial methods. Theoretically, we prove first- and second-order instance-dependent bounds with linear scaling in the horizon, proving that our approach avoids the compounding errors that stymie offline approaches to imitation learning. Empirically, we show our approach outperforms both GAN-style imitation learning baselines and discriminator-free imitation learning baselines across various continuous control problems, including complex tasks like controlling humanoids to walk, sit, crawl, and navigate through obstacles.
    \looseness=-1
\end{abstract}

\section{Introduction}
Fundamentally, in imitation learning (IL, \citet{osa2018algorithmic}), we want to match the sequential behavior of an expert demonstrator. Different notions of what matching should mean for IL have been proposed in the literature, from $f$-divergences \citep{ho2016generative,ke2021imitation} to Integral Probability Metrics (IPMs, \citet{muller1997integral, sun2019provably, kidambi2021mobile, swamy2021moments,chang2021mitigating,song2024hybrid}). To compute the chosen notion of divergence from the expert demonstrations so that the learner can then optimize it, it is common to train a \textit{discriminator} (i.e. a classifier) between expert and learner data. This discriminator is then used as a reward function for a policy update, an approach known as \textit{inverse reinforcement learning} (IRL, \citet{abbeel2004apprenticeship, ziebart2008maximum}). Various losses for training discriminators have been proposed \citep{ho2016generative, kostrikov2018discriminator, fu2017learning, ke2021imitation, swamy2021moments, chang2024adversarial}, and IRL has been applied in real life in domains like routing \citep{barnes2023massively} and LLM fine-tuning \citep{wulfmeier2024imitating}. \looseness=-1

As observed by \citet{finn2016connection}, inverse RL can be seen as the sequential generalization of a Generative Adversarial Network (GAN) \citep{goodfellow2020generative}. However, even in the single-shot setting, GANs are known to suffer from issues like unstable training dynamics and mode collapse ~\citep{miyato2018spectral, arjovsky2017towards}. In contrast, Score-Based Diffusion Models \citep{ho2020denoising, song2019generative, song2020score} are known to be more stable to train and produce higher quality samples in domains like audio and video \citep{rombach2022high, ramesh2022hierarchical, kong2020diffwave}. Intuitively, diffusion models corrupt samples from a target distribution with noise and train a generative model to reverse this corruption. Critically, these models are trained via \textit{score-matching}: a non-adversarial, purely regression-based procedure in which a network is trained to match the score (i.e., gradient of the log probability) of the target distribution.

In fact, prior work has shown that diffusion models are powerful policy classes for IL due to their ability to model multi-modal behavior \citep{chi2023diffusion}. However, these approaches are purely \textit{offline} behavioral cloning \citep{pomerleau1988alvinn}, and therefore can suffer from the well-known compounding error issue \citep{ross2011reduction} that affects \textit{all} offline approaches to IL \citep{swamy2021moments}. %

Taken together, the preceding points beg the question:
\begin{center}
    \textbf{\textit{Can we lift the insights of diffusion models to inverse reinforcement learning?}}
\end{center}
Our answer to this question is \algname{}: a new IRL framework that abandons unstable discriminator training in favor of a simpler, score-matching based objective. We first fit a score function to the expert's state distribution. Then, at each iteration of our algorithm, we fit a score function to the state distribution of the mixture of the preceding policies via standard regression-based score matching, before using the combination of these score functions to define a cost function for the policy search step. Our framework treats diffusion model training (score-matching) as a black box which allows us to transfer any advancements in diffusion model training (e.g., better noise scheduling or better training techniques) to IRL. 
In theory, rather than optimizing either an $f$-divergence or an IPM, this corresponds to minimizing a novel sequential generalization of the Fisher divergence \citep{johnson2004information} we term the \textit{Diffusion Score Divergence (DS Divergence)}. %

We demonstrate the advantages of our framework in both theory and practice. In theory, we show that our approach can achieve first- and second-order instance-dependent regret bounds, as well as a linear scaling in the horizon to model-misspecification errors arising from expert not being realizable by the learner's policy class or potential optimization error in score-matching and policy search. Thus, we establish that,
\textbf{by lifting score-matching to IRL, \algname{} provably avoids compounding errors and achieves instance-dependent bounds.}
Intuitively, the second-order bounds mean that the performance gap between our learned policy and the expert \textit{automatically} shrinks when either the expert or the learned policy has low variance in terms of their performance under the ground-truth reward (e.g. when the expert and dynamics are relatively deterministic). This is because when we perform score-matching, we are actually minimizing the \emph{squared Hellinger distance} between the learner and the expert's state distributions, which has shown to play an important role in achieving second-order regret bounds in the Reinforcement Learning setting \citep{wang2024more}. The ability to achieve instance-dependent bounds demonstrates the theoretical benefit of Diffusion Score divergence over other metrics such as IPMs and $f$-divergences.
In practice, we show that under in the IL from observation only setting \citep{torabi2018behavioral,sun2019provably}, \textbf{\algname{} outperforms adversarial GAN-based IL baselines, discriminator-free IL baselines, and Behavioral Cloning\footnote{This is despite the fact that BC requires expert actions, while we do not.} on complex tasks such as controlling humanoids to walk, sit, crawl, and navigate through poles} (see \Cref{fig:demo}). This makes \algname{} the first IRL method to solve multiple tasks on the recently released HumanoidBench benchmark \citep{sferrazza2024humanoidbench} using only the state information of the expert demonstrations. We release the code for all experiments at \url{https://github.com/ziqian2000/SMILING}.
\looseness=-1

\section{Related Works}\label{sec:related-work}

\noindent \textbf{Inverse Reinforcement Learning.} Starting with the seminal work of \citet{abbeel2004apprenticeship}, various authors have proposed solving the problem of imitation via inverse RL \citep{ziebart2008maximum, ratliff2006maximum, sun2019provably, kidambi2021mobile,chang2021mitigating}. As argued by \citet{swamy2021moments}, these approaches can be thought of as minimizing an integral probability metric (IPM) between learner and expert behavior via an adversarial training procedure -- we refer interested readers to their paper for a full list. Many other IRL approaches can instead be framed as minimizing an $f$-divergence via adversarial training \citep{ke2021imitation}, including GAIL \citep{ho2016generative}, DAC~\citep{kostrikov2018discriminator}, AIRL~\citep{fu2017learning}, FAIRL~\citep{ghasemipour2020divergence}, f-GAIL~\citep{zhang2020f}, and f-IRL~\citep{ni2021f}. However, all of the preceding approaches involve training a discriminator, while our technique only requires fitting score functions and does not seek to approximate either an $f$-divergence or an IPM. {\citet{lai2024diffusion,wang2024diffail} also explored the use of diffusion models for imitation learning. They insert the score matching loss into the $f$-divergence objective of the discriminator. Thus, their method also belongs to the $f$-divergence framework.} \citet{huang2024diffusion} also utilize diffusion models for imitation learning, proposing to train a conditional diffusion model as a discriminator for single-step state transitions between the expert and the current policy. Their approach remains within the classic $f$-divergence framework as well. Additionally, various authors have proposed using techniques to either stabilize this training procedure (e.g. via boosting, \citet{chang2024adversarial}) or reducing the amount of interaction required during the RL step \citep{swamy2023inverse, ren2024hybrid, sapora2024evil} -- as we focus on improving the reward function used in inverse RL, our approach is orthogonal to and could be naturally applied on top of these techniques.

\noindent \textbf{Discriminator-Free Inverse Reinforcement Learning.} SQIL~\citep{reddy2019sqil} replaces training a discriminator with a fixed reward function ($+1$ for expert data, $0$ for learner data). Unfortunately, this means that a performant learner is dis-incentivized to perform expert-like behavior, leading to dramatic drops in performance \citep{barde2020adversarial}. The AdRIL algorithm of \citet{swamy2021moments} uses techniques from functional gradients to address this issue, but requires the use off an off-policy RL algorithm to implement, while our framework makes no such assumptions. ASAF~\citep{barde2020adversarial} proposes using the prior policy to compute the optimal $f$-divergence discriminator in closed form, while we focus on a different class of divergences. IQ-Learn \citep{garg2021iq} proposes to perform IRL in the space $Q$ functions, but can therefore suffer from poor performance on problems with stochastic dynamics \citep{ren2024hybrid} or when the $Q$-function is more complex than the reward function (e.g. navigating to a goal in a maze). 
\citet{al2023ls} propose to use a reward regularization to overcome the instabilities of prior work that learns $Q$-functions.
\citet{sikchi2024dual} point out that prior methods rely on restrictive coverage assumptions and propose a new method that learns to imitate from arbitrary off-policy data. \citet{jain2024revisiting} propose measuring differences between learner and expert behavior in terms of successor features, but are only able to optimize over deterministic policies. 
\citet{sikchi2024dual2} suggest to learn a multi-step utility function that
quantifies the impact of actions on the agent’s divergence from the expert’s visitation distribution.
DRIL \citep{brantley2019disagreement} relies on ensemble methods and uses the disagreement among models in the ensemble as a surrogate cost function, making it challenging to prove similar theoretical guarantees to our technique.
\citet{dadashi2021primal} frame IL as minimizing the Wasserstein distance in its primal form rather than the dual commonly used in prior discriminator-based works. {\citet{liu2020energy} also consider score matching but they use energy-based models instead of diffusion models.}

\noindent \textbf{Diffusion Models for Decision Making.} 
Diffusion models have been widely studied in the context of robot learning, where diffusion models have demonstrated a strong ability to capture complex and multi-modal data. 
{Recent works have explored using diffusion models to perform behavioral cloning~\citep{chi2023diffusion, pearce2023imitating,chen2023diffusion, block2023provableguaranteesgenerativebehavior, team2024octo} and for reward guidance~\citep{nuti2024extracting}} as well as using diffusion models to capture trajectory-level dynamics~~\citep{janner2022planning,ajay2022conditional}. 
We instead focus on lifting insights from diffusion modeling and score matching to IRL, which uses both offline expert data and also online interaction with the environment. Unlike prior diffusion BC work, we use diffusion models for both the expert's and the learner's state distributions, not just the expert's action-conditional distributions. This means we do not require expert action labels, which can be challenging to acquire in practice.

\section{Preliminaries}

\subsection{Markov Decision Processes}

We consider finite-horizon Markov decision processes (MDPs) with state space $\+S\subseteq\=R^d$, action space $\+A$, transition dynamics $P$, (unknown) cost function $c^\star: \+S \to \=R$,\footnote{We assume the cost depends only on states. The extension to state-action costs is straightforward.} and horizon $H$.
The goal is to find a policy $\pi: \+S \to \Delta(\+A)$ that minimizes the expected cost $H\cdot\E_{s \sim d^\pi} c^\star(s)$ where $d^\pi$ denotes the average state distribution induced by the policy $\pi$, i.e., $d^\pi(s) = \sum_{h=1}^H \Pr(s_h = s) / H$ where $s_h$ denotes the state at time $h$. The state value function of a policy $\pi$ is defined as $V^\pi(s) = \E_{\tau \sim \pi} \sum_{h=1}^H c^\star(s_h)$ where $\tau=(s_1,\dots,s_H)$ denotes the trajectory induced by the policy $\pi$.

\subsection{Imitation Learning}

In imitation learning, we are usually given expert demonstration in state-action-next-state tuples and aim to learn a policy $\pi$ that mimics the expert policy $\pi^e$ without access to the ground-truth cost function $c^{\star}$. This paper considers a harder setting where only states are given. Specifically, given a dataset of state demonstrations $\+D^e = \{ s^{(i)} \}_{i=1}^N$ sampled from expert, we aim to learn a policy $\pi$ that minimizes the discrepancy between its state distributions and the expert’s. In addition, we can interact with the environment to collect more (reward-free) learner trajectories.

\subsection{Diffusion Models}\label{sec:diff-model}

The \textit{forward process} of a diffusion model adds noise to a sample from the data distribution $p_0$. We can formalize it via a stochastic differential equation (SDE). For simplicity in this paper, we consider the Ornstein-Uhlenbeck (OU) process:
$\d \xb_t = -\xb_t \d t + \sqrt{2} \d B_t$, 
  $\xb_0 \sim p_0$,
where $B_t$ is the standard Brownian motion in $\RR^d$. It is known that the \textit{reverse} process $(\yb_t)_{t \in [0,T]}$ satisfies the following reverse-time SDE~\citep{anderson1982reverse}: $\d \yb_t = (\yb_t + 2 \nabla \log p_{T-t}(\yb_t)) \d t + \sqrt{2} \d B_t, 
\yb_0 \sim p_T$
where $U(T)$ denotes the uniform distribution on $[0,T]$, and $B_t$ now denotes reversed Brownian motion. The gradient of the log probability $\nabla \log p_t$ is called the \textit{score function} of the distribution $p_t$. When the score function is known, we can use it to generate samples from $p_0$ by simulating the reverse process. Estimating the score function $\nabla \log p_t$ is called \textit{score matching}, which is typically done via minimizing the regression-based loss:
$
  \min_{g} \E_{\xb \sim p_0} \E_{t \sim U(T)} \E_{\xb_t \sim q_t(\cdot \given \xb)}  \| g(\xb_t, t) - \nabla_{\xb_t} \log q_t(\xb_t \given \xb)  \|_2^2.
$
We use $q_t(\xb_t \given \xb)$ to denote the conditional distribution at time $t$ of the forward process conditioned on the initial state $\xb$, which has closed form $q_t(\xb_t \given \xb) = \+N( \xb e^{-t} , (1 - e^{-2t}) I)$. 

\noindent\textbf{Applying Diffusion to State Distributions.}
For a policy $\pi$, we use $p_t^\pi$ to denote the marginal distribution at time $t$ obtained by applying the forward diffusion process to $d^\pi$, i.e., initial samples are drawn from $p_0^\pi \coloneqq d^\pi$. When presenting asymptotic results, we use $O(\cdot)$ to hide constants and $\w~{O}(\cdot)$ to hide constants and logarithmic factors.

\section{Algorithm}

\begin{figure}[t]
    \vspace{-1em}
    \centering
    \begin{subfigure}{.37\textwidth}
      \centering
      \raisebox{15pt}{\includegraphics[width=\linewidth]{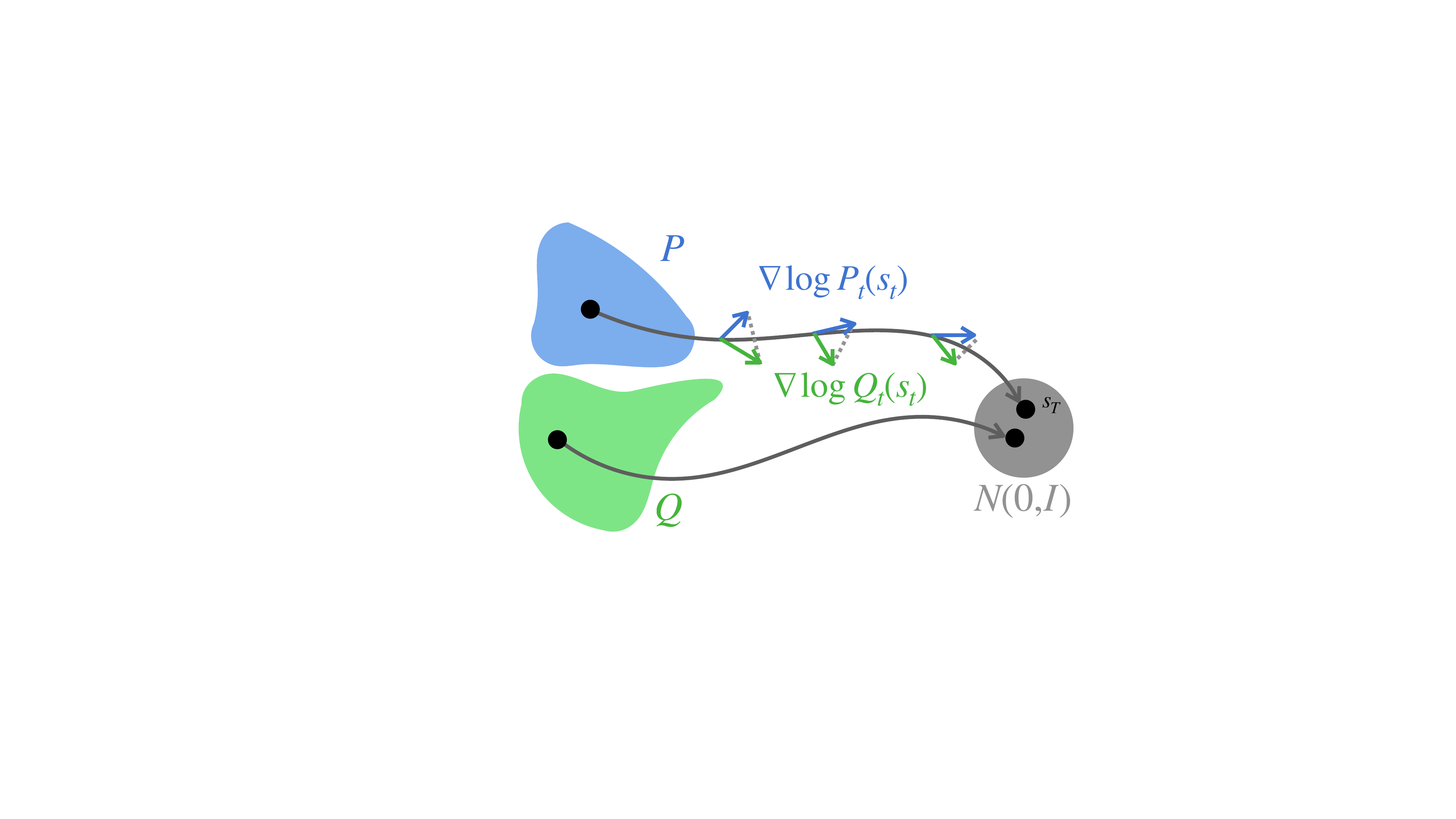}}
      \caption{Illustration of DS Divergence (\Cref{def:ds}).} %
      \label{fig:ds}
    \end{subfigure}%
    \hspace{15pt}
    \begin{subfigure}{.58\textwidth}
      \centering
      \includegraphics[width=\linewidth]{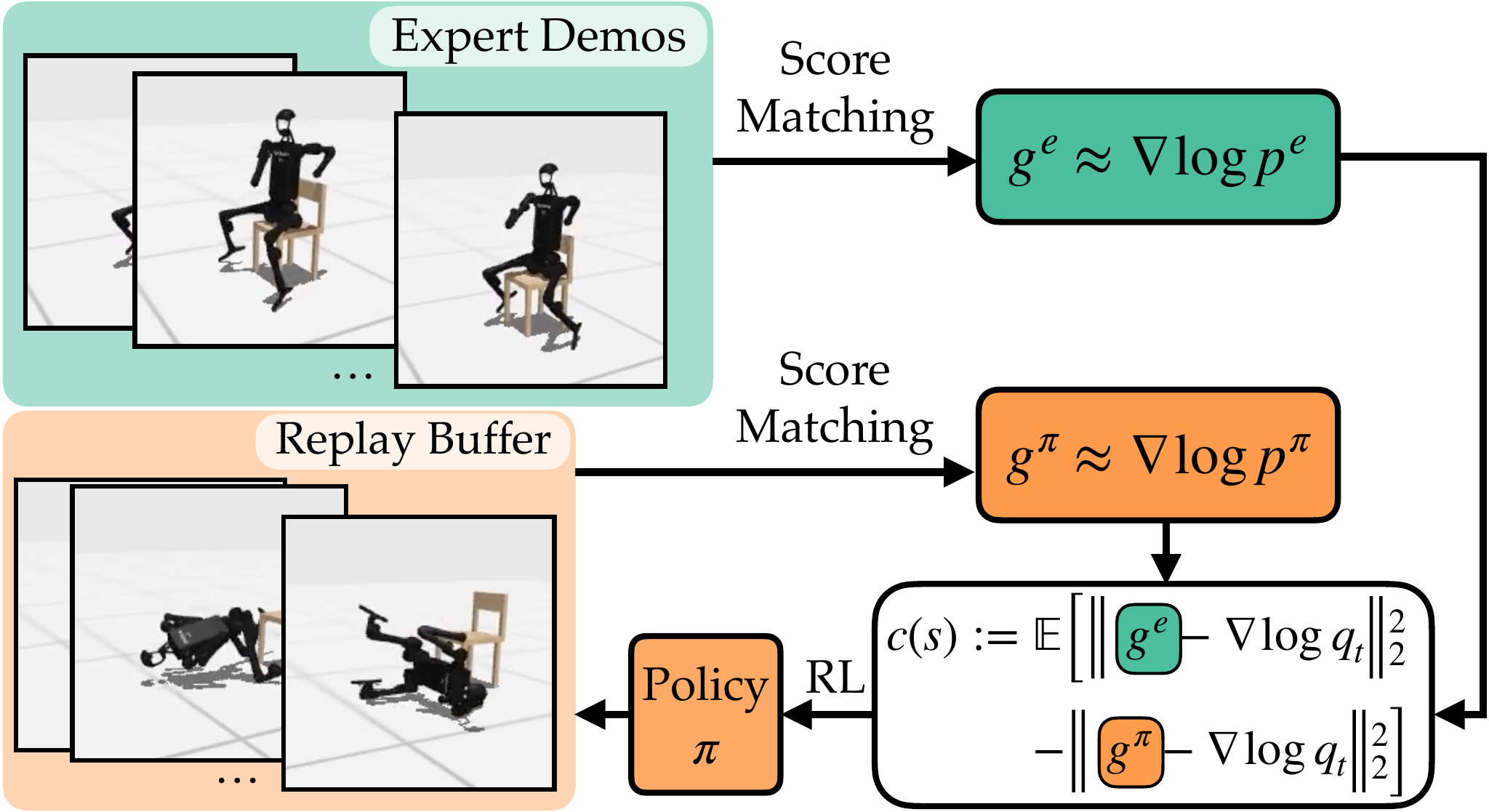}
      \caption{Illustration of \algname{} (\Cref{alg}).}
      \label{fig:diagram}
    \end{subfigure}
    \caption{Figure \hyperref[fig:ds]{(a)}: The two curves represent the forward diffusion process of distributions $P$ and $Q$. DS Divergence measures the squared difference between the diffusion score functions, \( \nabla \log P_t(s_t) \) and \( \nabla \log Q_t(s_t) \), along the forward diffusion process of $P$. Figure \hyperref[fig:diagram]{(b)}: \algname{} first pre-trains a diffusion model from the expert's data. It then iteratively trains diffusion models on learner's data and performs RL to optimize a cost function formed by the learner's score function and the pre-trained expert score function. The cost function is designed to faithfully approximate the DS divergence (\Cref{def:ds}) between the learner and the expert.}
    \vspace{-1.2em}
\end{figure}

\begin{algorithm}[t]
\begin{algorithmic}[1]
\caption{\algname{} (\textbf{S}core-\textbf{M}atching \textbf{I}mitation \textbf{L}earn\textbf{ING}) }
\label{alg}
\REQUIRE state-only expert demonstration $\+D^e = \{ s^{(i)} \}_{i=1}^N$
\STATE\label{line:expert} Estimate score function of expert state distribution:
\begin{align*}
    g^e \gets \argmin_{g \in \+G} \E_{s \sim \+D^e } \E_{t\sim U(T)} \E_{s_t \sim q_t(\cdot \given s)} \left[ \left\| g(s_t, t) - \nabla_{s_t} \log q_t(s_t \given s) \right\|_2^2 \right]
\end{align*}
\FOR{$k=1,2,\dots,K$}

\STATE\label{line:score} Estimate the score function of learner state distributions:
\begin{align*}
    g^{(k)} \gets \argmin_{g \in \+G} \sum_{i=1}^{k-1} \E_{s \sim d^{\pi^{(i)}}} \E_{t\sim U(T)} \E_{s_t \sim q_t(\cdot \given s)} \left[ \left\| g(s_t, t) - \nabla_{s_t} \log q_t(s_t \given s) \right\|_2^2 \right].
\end{align*}
\STATE\label{line:rl} Update policy $\pi^{(k)}$ via RL (e.g., SAC) on cost $c^{(k)}$ (Eq. \ref{eq:cost}): $\pi^{(k)} \gets \text{RL} ( c^{(k)} ) $
\ENDFOR
\end{algorithmic}
\end{algorithm}

We introduce a novel discrepancy measure between two distributions that our algorithm leverages:
\begin{definition}[Diffusion Score Divergence]\label{def:ds}
    For two distributions $P$ and $Q$, we define the \emph{Diffusion Score Divergence (DS Divergence)} as
    \begin{align*}
        D_\@{DS} ( P, Q ) := \E_{s \sim P} \E_{t\sim U(T)} \E_{s_t\sim q_t(\cdot \given s)} \Big\|  \nabla \log P_t(  s_t ) - \nabla \log Q_t(s_t)   \Big\|_2^2.
    \end{align*}
    Here $q_t(\cdot\given s)$ represents the conditional probability of the forward diffusion process at time $t$ conditioned on the initial state $s$; $P_t$ and $Q_t$ denote the marginal distributions at time $t$ obtained by applying the forward diffusion process to $P$ and $Q$, respectively. We call $\nabla\log P_t$ and $\nabla\log Q_t$ the \emph{Diffusion Score Function} of $P$ and $Q$.
\end{definition}
\Cref{fig:ds} illustrates the DS divergence.
It measures the difference between two distributions by comparing their diffusion score functions within one diffusion process. 
It is analogous to Fisher divergence but differs by incorporating an expectation over the diffusion process. DS divergence is a strong divergence in the sense that, whenever the DS divergence between the two distributions is small,
the KL divergence, Hellinger distance, and total variation distance are all small (see \Cref{lem:ds-hellinger-v2} and also \citet{chen2022sampling,oko2023diffusion}). 

In our algorithmic framework \textbf{S}core-\textbf{M}atching \textbf{I}mitation \textbf{L}earn\textbf{ING} --- \algname{}, 
\textbf{\textit{we propose to frame imitation learning as the minimization of the DS divergence between the expert's and the (history of) learner state distributions.}} We begin by discussing how to do so before discussing the theoretical benefits of doing so.

The first step is to \emph{pre-train} a score function estimator $g^e(s,t)$ on expert's state distribution $\nabla_{s} \log p^{\pi^e}_t(s)$. This can be done via standard least-squares regression-based score-matching, as shown in Line~\ref{line:expert}. Here $\+G$ denotes the function class for score estimators. 
Then, we seek to find a policy $\pi$ that minimizes the DS divergence between learner and expert state distributions:
\begin{align*}
\ell( \pi) := \EE_{s\sim d^{\pi}, t \sim U(T), s_t\sim q_t(\cdot | s)} \left\|   g^e(s_t,t) - \nabla_{s_t} \log p^{\pi}_t(s_t)    \right\|_2^2,
\numberthis\label{eq:ideal_obj}
\end{align*}
where we have approximated the expert's diffusion score function by $g^e(s_t,t)$.
However, $\ell(\pi)$ is not directly computable since we do not know the learner's score function $\nabla_{s_t} \log p^{\pi}_t(s_t)$. A naive approach would be to directly learn an estimator for $\nabla_{s_t} \log p^{\pi}_t(s_t)$ via score matching and substitute it into the equation. However, as we prove in \Cref{sec:err}, this can introduce significant errors due to the unboundedness of a difference of score functions and variance-related concerns. We now derive a method that does not suffer from this issue.

Recall that we defined $q_t(s_t \given s)$ as the distribution of $s_t$ given the initial sample $s$, which is a simple Gaussian distribution with an appropriately scaled variance.
To develop our method, we first note that, luckily, $\nabla_{s_t} \log q_t(s_t | s)$ is an unbiased estimator of the score $\nabla_{s_t} \log p^{\pi}_t(s_t)$, because $\EE_{s | s_t}[ \nabla_{s_t} \log q_t(s_t | s) ]  = \nabla_{s_t} \log p^\pi_t(s_t)$
\citep{song2020score}. Given this, perhaps the most immediate strategy would be to simply replace $\nabla_{s_t} \log p^{\pi}_t(s_t)$ by its unbiased estimator, $\nabla_{s_t} \log q_t(s_t | s)$, in Eq.~\ref{eq:ideal_obj}. However, in contrast to linear objectives like an IPM, to approximate a squared objective like the DS Divergence accurately, we need to get both the expectation (i.e. have an unbiased estimator) as well as the \textit{variance} correct. If we don't, then $ \EE_{s\sim d^{\pi}, t \sim U(T), s_t\sim q_t(\cdot | s)} \|   g^e(s_t,t) - \nabla_{s_t} \log q_t(s_t | s)    \|_2^2$ will differ from $\ell(\pi)$ by a term related to \emph{variance} of the estimator $\nabla_{s_t} \log q_t(s_t| s)$, i.e.
\begin{align}
\label{eq:variance}
\EE_{s\sim d^{\pi}, t \sim U(T), s_t\sim q_t(\cdot | s)} \left\|   \nabla_{s_t} \log p^{\pi}_t(s_t)   - \nabla_{s_t} \log q_t(s_t | s)  \right\|_2^2.
\end{align} To faithfully approximate $\ell(\pi)$, we need to estimate the variance term and subtract it from $ \EE_{s\sim d^{\pi}, t \sim U(T), s_t\sim q_t(\cdot | s)} \|   g^e(s_t,t) - \nabla_{s_t} \log q_t(s_t | s)   \|_2^2$. We can do this by adding a term to our objective that is minimized at the variance and using another function / network to optimize it. This leads us to the following estimator for $\ell(\pi)$:
\begin{align*}
& \hat \ell(\pi) := \EE_{s\sim d^{\pi}, t \sim U(T), s_t\sim q_t(\cdot | s)} \left\|   g^e(s_t,t) - \nabla_{s_t} \log q_t(s_t | s)    \right\|_2^2  \\
 & \qquad\qquad - \underbrace{\textcolor{orange}{\min_{g\in \Gcal} \EE_{s\sim d^{\pi}, t \sim U(T), s_t\sim q_t(\cdot | s)} \left\|  g(s_t, t)- \nabla_{s_t} \log q_t(s_t | s)    \right\|_2^2}}_{\text{Minimized at the variance of Eq.~\ref{eq:variance}}}
\end{align*} where $\+G$ is the function class for score estimators. Crucially, the highlighted term (orange) in the above expression 
estimates the variance in Eq.~\ref{eq:variance} since one of its minimizers will be the Bayes optimal $\nabla_{s_t} \log p^\pi_t(s_t)$, i.e. the conditional expectation $\EE_{s|s_t}\left[\nabla_{s_t} \log q_t(s_t|s)\right]$.\footnote{Note that similar ideas have been used in the offline RL literature for designing min-max based algorithms for estimating value functions (e.g., \cite{chen2019information,uehara2021finite}).} Observe that minimizing this orange term just requires a simple regression-based score matching objective. So, while naive score matching causes issues due to the variance of the estimator, a more clever application of score matching can be used to fix this concern.

With $\hat \ell(\pi)$ now serving as a valid approximation of $\ell(\pi)$, the IL problem reduces to searching for a policy to minimize $\hat \ell(\pi)$, i.e., $\min_{\pi} \hat \ell(\pi)$. To facilitate this, we define payoff $\Lcal(\pi, g)$ as:
\begin{align*}
\Lcal(\pi, g) & := \EE_{s\sim d^{\pi}, t \sim U(T), s_t\sim q_t(\cdot | s)} \left( \left\|   g^e(s_t,t) - \nabla_{s_t} \log q_t(s_t | s)    \right\|_2^2  - \left\|  g(s_t, t)- \nabla_{s_t} \log q_t(s_t | s)    \right\|_2^2 \right).
\end{align*} 
Then, minimizing $\hat \ell(\pi)$ is equivalent to solving the two-player zero sum game $
\min_{\pi} \max_{g} \Lcal(\pi, g). 
$
To solve this game, we propose following a no-regret strategy over $g$ against a best response over $\pi$ (i.e. a \textit{dual} algorithm \citep{swamy2021moments}). Specifically, 
\Cref{alg} applies Follow-the-Leader \citep{shalev2012online} to optimize $g$ in Line~\ref{line:score},\footnote{Note that $\+L(\pi, g)$ is a square-loss functional with respect to $g$. Since square loss is strongly convex, FTL is no-regret. In contrast, when optimizing an IPM, one has to use Follow-the-Regularized-Leader (FTRL) to have the no-regret property (e.g., \cite{sun2019provably, swamy2021moments}), making implementation more complicated.  } and performs best response computation over $\pi$ via RL  (Line~\ref{line:rl}) under cost function $c^k(s)$:
\begin{align*}
    & c^{(k)}(s):=\E_{t\sim U(T)} \E_{s_t \sim q_t(\cdot \given s)} \left[ \Big\| g^e(s_t, t) - \nabla_{s_t} \log q_t(s_t \given s) \Big\|_2^2 - \Big\| g^{(k)}(s_t, t) - \nabla_{s_t} \log q_t(s_t \given s)\Big\|_2^2 \right].
    \numberthis\label{eq:cost}
\end{align*}

\begin{remark}[Noise-prediction Form of the Cost Function]
    The cost function in Eq.~\ref{eq:cost} involves the score function $\nabla \log q_t(s_t \given s)$, which has a closed-form expression for most modern diffusion models including DDPM. Specifically, when the diffusion follows the OU process~(described in \Cref{sec:diff-model}), the score function takes the form $\nabla \log q_t(s_t \given s) = (1 - e^{-2t})^{-1} (s e^{-t} - s_t)$, which is exactly the noise added to the original sample $s$ over diffusion (recalling that for the OU process, $q_t(s_t\given s) = \+N(s e^{-t},(1-e^{-2t})I)$). We denote this noise by $\epsilon$. Then, the cost function (Eq.~\ref{eq:cost}) is equivalent to 
$
    c^{(k)}(s):=\E_{t\sim U(T)} \E_{\epsilon\sim\+N(0,I)}[ \| g^e(se^{-t} + \epsilon, t) - \epsilon \|_2^2 - \| g^{(k)}(se^{-t} + \epsilon, t) - \epsilon\|_2^2 ].
$
This closely resembles the noise prediction in DDPM.
\end{remark}

To implement FTL, in Line~\ref{line:score}, we use the classic idea of Data Aggregation (DAgger) \citep{ross2011reduction}, which corresponds to aggregating all states collected from prior learned policies $\pi^{(1)}, \dots, \pi^{(k-1)}$, and perform a score-matching least square regression on the aggregated dataset.  The RL procedure can take advantage of any modern RL optimizers. In our experiments, we use SAC \citep{haarnoja2018soft} and DreamerV3 \citep{hafner2023mastering}, which serve as representative model-free  and model-based RL algorithms.

\section{Theoretical Results}\label{sec:theory}

Our goal in this section is to demonstrate that \textbf{\algname{} achieves instance-dependent regret bounds while at the same time avoid compounding errors} when model misspecification, optimization error, and statistical error exist.  In practice, we can only estimate the diffusion score function up to some statistical error or optimization error due to finite samples. For similar reasons, the RL procedure (\Cref{line:rl}) can only find a near-optimal policy especially when the policy class is not rich enough to capture the expert's policy. To demonstrate that our algorithm can tolerate these errors, we explicitly study these errors in our theoretical analysis, particually how our regret bound scales with respect to these errors. We provide the following assumptions to formalize these errors.

\begin{assumption}\label{asm:error}
    We have the following error bounds, corresponding respectively to \Cref{line:expert,line:score,line:rl} of \Cref{alg}:
    \begin{enumerate}[leftmargin=*,label=(\alph*)]
        \item\label{it:expert} 
        The estimator $g^e$ is accurate up to some error $\epsilon_\@{score}$:
        $
            \E_{s \sim d^{\pi^e}} \E_{t\sim U(T)} \E_{s_t \sim q_t(\cdot \given s)} [ \| g^{e}(s_t, t) - \nabla_{s_t} \log p^{\pi^e}_t(s_t) \|_2^2 ] \leq \epsilon_\@{score}^2;
        $
        \item\label{it:regret}  There exists a function $\@{Regret}(K)$ sublinear in $K$ such that the sequence of score function estimators $\{g^{(k)}\}_{k=1}^K$ has regret bounded by $\@{Regret}(K)$:
        \begin{align*}
            & \sum_{k=1}^K \E_{s \sim d^{\pi^{(k)}}} \E_{t\sim U(T)} \E_{s_t \sim q_t(\cdot \given s)} \left[ \left\| g^{(k)}(s_t, t) - \nabla_{s_t} \log q_t(s_t \given s) \right\|_2^2 \right]  \\
            & \leq \min_g 
            \sum_{k=1}^K \E_{s \sim d^{\pi^{(k)}}} \E_{t\sim U(T)} \E_{s_t \sim q_t(\cdot \given s)} \left[ \left\| g(s_t, t) - \nabla_{s_t} \log q_t(s_t \given s) \right\|_2^2 \right]
            + \@{Regret}(K);
        \end{align*}
        
        \item\label{it:rl}  
        There exists $\epsilon_\@{RL} > 0$ such that, for all $k=1,\dots,K$, the RL procedure finds an $\epsilon_\@{RL}$-optimal policy within some function class $\Pi$:
        $
            \E_{s \sim d^{\pi^{(k)}}} c^{(k)}(s) - \min_{\pi\in\Pi} \E_{s \sim d^\pi} c^{(k)}(s) \leq \epsilon_\@{RL}.
        $ 
        Note that we do not assume $\pi^e \in \Pi$.
    \end{enumerate}
\end{assumption}
\textbf{\Cref{it:expert}} is a standard assumption in diffusion process and score matching~\citep{chen2022sampling,chen2023improved,lee2022convergence}. When the function class $\Gcal$ (hypothesis space) is finite, $\epsilon^2_\@{score}$ typically scales at a rate of $\w~{O}\left(\ln(|\Gcal|) / N\right)$ in terms of the number of samples $N$. For an infinite function class, more advanced bounds can be established (e.g., see Theorem 4.3 in \citet{oko2023diffusion}). We emphasize that the discretization error arising from approximating the diffusion SDE using Markov chains is included in $\epsilon_\@{score}$ as well. 
\textbf{\Cref{it:regret}} is similar to \Cref{it:expert} but for the sequence of score function estimators. It can be satisfied by applying Follow-the-Leader (FTL) since the loss functional for $g$ is a square loss. Other no-regret algorithms, such as Follow-the-regularized-leader (FTRL) or online gradient descent, can also be used. 
Under similar conditions as in \Cref{it:expert}, one typically achieves $\@{Regret}(K) = \w~{O}( \sqrt{K})$. \textbf{\Cref{it:rl}} can be satisfied as long as an efficient RL algorithm is applied at each iteration in \Cref{line:rl}. The rate of $\epsilon_\@{RL}$ has been well-studied in the RL theory literature for various of MDPs. For instance, in tabular MDPs, one can achieve $\epsilon_\@{RL} = \w~{O}(\sqrt{SA/M})$ where $S$ is the number of states, $A$ is the number of actions, and $M$ is the number of RL rollout samples. We emphasize that $\epsilon_\@{RL}$ in this case does not scale with $H$ as it is defined as the upper bound on the \emph{averaged} return instead of the cumulative return over $H$ steps.

In the theoretical analysis, we will not assume any misspecification of score functions. Specifically, for any policy $\pi$, we assume that its score function lies in the function class $\Gcal$ (i.e., $\nabla \log p_t^\pi \in \Gcal$). However, we allow for misspecification in the RL procedure (i.e., $\pi^e$ may not be in $\Pi$). This enables us to focus more on the misspecification of the RL procedure under our defined framework.

Next, we introduce the regularity assumptions on the diffusion process, which are standard in the literature on diffusion models.
\begin{assumption}\label{asm:diffusion}
For all $\pi$, the following two conditions hold:
(1) the diffusion score function $\nabla \log p_t^\pi$ is Lipschitz continuous with a finite constant for all $t\in[0,T]$, and 
(2) the second moment of $d^\pi$ is bounded: $\E_{s\sim d^\pi} \big[ \| s \|_2^2 \big]\leq  m $ for some $ m  > 0$.
\end{assumption} 
The first condition ensures that the score function behaves well so we can transfer the DS divergence bound to a KL divergence bound.
The second condition is needed for the exponential convergence of the forward diffusion process (i.e., the convergence to the standard Gaussian in terms of KL divergence). 
We note that the second condition is readily satisfied in certain simple cases such as when the state space is bounded. 
Notably, if we disregard discretization errors, the Lipschitz constant will not appear in our theoretical results as long as it is finite, even if it is arbitrarily large. Similarly, the constant $ m $ only appears within logarithmic factors so it can be exponentially large. However, when discretization errors are considered, both the Lipschitz constant and $ m $ will appear in $\epsilon_\@{score}$ and $\@{Regret}(T)$ (see, e.g., \citet{chen2022sampling,chen2023improved,oko2023diffusion}).

Now we are ready to present our main theoretical result. 
Let $\pi^{(1:K)}$ denote the mixture policy of $\{\pi^{(1)},\dots,\pi^{(K)}\}$. Particularly, $\pi^{(1:K)}$ is executed by first choosing a policy $\pi^{(k)}$ uniformly at random and then executing the chosen policy. For any policy $\pi$, we define the variance of its return as $\@{Var}^\pi\coloneqq \@{Variance}(\sum_{h=1}^H c^\star (s_h))$. Below is our main theoretical result with proof in \Cref{sec:pf}.

\begin{theorem}\label{thm:main}
    Under \Cref{asm:error,asm:diffusion}, \Cref{alg} achieves these instance-dependent bounds:
    \begin{align*}
        \text{(Second-order)}\qquad&
        V^{\pi^{(1:K)}} - V^{\pi^e} = \w~{O}\left(\sqrt{\min\big(\@{Var}^{\pi^e},\@{Var}^{\pi^{(1:K)}}\big) \cdot \epsilon } + \epsilon H \right); \\
        \text{(First-order)}\qquad&
        V^{\pi^{(1:K)}} - V^{\pi^e} = \w~{O}\left(\sqrt{\min\big(V^{\pi^e},V^{\pi^{(1:K)}}\big) \cdot \epsilon H } + \epsilon H \right)
    \end{align*}
    where we define total error
    $
        \epsilon\coloneqq 
            \begin{cases}
                \epsilon^2_\@{score} + \epsilon_\@{RL} + \@{Regret}(K)/K & \text{if } \pi^e\in\Pi, \\
                \epsilon^2_\@{score} + \epsilon_\@{RL} + \@{Regret}(K)/K + \epsilon_\@{mis} & \text{if } \pi^e\not\in\Pi,
            \end{cases}
    $
    and the misspecification 
    $\epsilon_\@{mis} \coloneqq \min_{\pi\in\Pi} \ell(\pi)$ where we recall that $\ell(\cdot)$ is defined in Eq.~\ref{eq:ideal_obj}.
\end{theorem}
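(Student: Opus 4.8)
The plan is to chain together three reductions: (i) from the value gap $V^{\pi^{(1:K)}} - V^{\pi^e}$ to a statement about DS divergence along the learner's mixture state distribution, (ii) from DS divergence to squared Hellinger distance between $d^{\pi^{(1:K)}}$ and $d^{\pi^e}$, and (iii) from Hellinger distance to the instance-dependent value gap via the second-order/first-order simulation-lemma machinery. The key quantity to control is $\ell(\pi^{(1:K)})$, the ideal DS-divergence objective in Eq.~\ref{eq:ideal_obj}. First I would argue that the no-regret dynamics over $g$ against best responses over $\pi$ (Line~\ref{line:score} and Line~\ref{line:rl}) guarantee that the \emph{average} over $k$ of $\hat\ell(\pi^{(k)})$ is close to $\min_{\pi\in\Pi}\hat\ell(\pi)$ up to $\@{Regret}(K)/K + \epsilon_\@{RL}$; this is the standard argument that a no-regret vs.\ best-response pair converges to the value of the zero-sum game $\min_\pi\max_g \Lcal(\pi,g)$. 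Then I would bound $\min_{\pi\in\Pi}\hat\ell(\pi)$: using the realizability of score functions (so the orange min-term actually equals the variance in Eq.~\ref{eq:variance}), $\hat\ell(\pi)$ equals $\ell(\pi)$ up to the expert score error $\epsilon_\@{score}^2$, so $\min_{\pi\in\Pi}\hat\ell(\pi) \le \epsilon_\@{mis} + O(\epsilon_\@{score}^2)$ when $\pi^e\notin\Pi$, and $\le O(\epsilon_\@{score}^2)$ when $\pi^e\in\Pi$ (since then $\ell(\pi^e)$ is just the expert score-matching error). Averaging and converting back, $\frac1K\sum_k \ell(\pi^{(k)}) = O(\epsilon)$, and by Jensen / linearity of the DS-divergence-style objective in the sampling distribution, $\ell(\pi^{(1:K)}) = \frac1K\sum_k\ell(\pi^{(k)}) = O(\epsilon)$ — this uses that $d^{\pi^{(1:K)}} = \frac1K\sum_k d^{\pi^{(k)}}$ and that the orange min-term is jointly handled by the mixture.

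Next I would invoke \Cref{lem:ds-hellinger-v2} (the DS-to-Hellinger bound, relying on \Cref{asm:diffusion} for the Lipschitz/second-moment regularity that lets the diffusion argument go through) to conclude $D_{\@H}^2\big(d^{\pi^{(1:K)}}, d^{\pi^e}\big) = \w~O(\epsilon)$, where $D_\@H^2$ is squared Hellinger distance. The final step is the instance-dependent conversion. For the second-order bound I would use a simulation-lemma-style decomposition $V^{\pi^{(1:K)}} - V^{\pi^e} = H\cdot\E_{s\sim d^{\pi^{(1:K)}}}c^\star(s) - H\cdot\E_{s\sim d^{\pi^e}}c^\star(s)$, bound this by a Hellinger-weighted Cauchy–Schwarz so that the leading term is $\sqrt{\@{Var}^{\pi}\cdot D_\@H^2}$ (picking whichever of $\pi^e$, $\pi^{(1:K)}$ gives the smaller variance, which is legitimate because Hellinger is symmetric), plus a lower-order $H\cdot D_\@H^2$ term; this is exactly the style of argument in \citet{wang2024more} connecting squared Hellinger to second-order RL bounds. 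The first-order bound follows the same template but bounds the ``variance'' term by the return itself (using $c^\star\ge 0$ up to normalization, so $\@{Var}\le H\cdot V$), yielding the $\sqrt{V\cdot\epsilon H}$ scaling.

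The main obstacle I expect is step (i)–(iii) \emph{bookkeeping of the horizon $H$ and the boundedness of the per-step cost $c^{(k)}$}: the cost function in Eq.~\ref{eq:cost} is a difference of squared score-matching residuals, which is a priori unbounded (this is exactly the subtlety the paper flags around Eq.~\ref{eq:variance}), so to run the no-regret-vs-best-response argument and to get $\epsilon_\@{RL}$ to enter additively (not multiplied by $H$) one needs the averaged-return formulation of \Cref{it:rl} together with an argument that the relevant costs are effectively bounded in expectation under the visited distributions — likely leveraging the Lipschitz score and bounded second moment from \Cref{asm:diffusion}, plus the fact that $\E_{s\sim d^\pi}c^{(k)}(s) = \Lcal(\pi, g^{(k)})$ which is controlled by DS-divergence-type quantities rather than worst-case magnitudes. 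Getting the variance-correction term (the orange min) to telescope correctly with the FTL regret on the \emph{same} function class $\Gcal$, so that it does not leak an extra error, is the delicate point; everything downstream (Hellinger conversion, instance-dependent Cauchy–Schwarz) is comparatively routine given the cited lemmas.
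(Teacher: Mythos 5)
Your overall architecture matches the paper's proof: control a DS-divergence quantity via no-regret over $g$ against (approximate) best responses over $\pi$, convert to squared Hellinger via \Cref{lem:ds-hellinger-v2} under \Cref{asm:diffusion}, then convert Hellinger to the value gap with the variance-weighted lemma of \citet{wang2024more} plus data processing, and obtain the first-order bound from the second-order one. However, your game-theoretic step has a genuine flaw. \Cref{it:regret} of \Cref{asm:error} is a \emph{fixed-comparator} regret guarantee, so combined with \Cref{it:rl} it only controls $\max_{g}\frac1K\sum_{k}\mathcal{L}(\pi^{(k)},g)$; it does not control the per-iterate average $\frac1K\sum_k\max_g\mathcal{L}(\pi^{(k)},g)=\frac1K\sum_k\ell(\pi^{(k)})$, which is what you claim is within $\mathrm{Regret}(K)/K+\epsilon_{\mathrm{RL}}$ of $\min_{\pi\in\Pi}\hat\ell(\pi)$. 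Indeed, $g^{(k)}$ is fit only to data from $\pi^{(1)},\dots,\pi^{(k-1)}$, so its score-matching excess risk on $d^{\pi^{(k)}}$ --- and hence $\ell(\pi^{(k)})$ --- can be large at every round; only the sum against a single comparator telescopes. Moreover, your identity $\ell(\pi^{(1:K)})=\frac1K\sum_k\ell(\pi^{(k)})$ is false: the score of a mixture is not the average of the component scores, and $\ell(\cdot)=\max_g\mathcal{L}(\cdot,g)$ is convex, not linear, in the state distribution, so only ``$\le$'' holds. The paper avoids both problems by working with the mixture from the outset: for each \emph{fixed} $g$, $\mathcal{L}(\pi,g)$ is linear in $d^\pi$, hence $\max_g\sum_k\mathcal{L}(\pi^{(k)},g)=K\max_g\mathcal{L}(\pi^{(1:K)},g)=K\,\ell(\pi^{(1:K)})$ by Eq.~\ref{eq:max-g} (score realizability), and this is exactly the quantity bounded by $K(\epsilon_{\mathrm{mis}}+\epsilon_{\mathrm{RL}})+\mathrm{Regret}(K)$ (or with $\epsilon_{\mathrm{mis}}$ replaced by $\epsilon^2_{\mathrm{score}}$ when $\pi^e\in\Pi$). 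Your proof is repairable by this substitution, but as written the step does not follow from the stated assumptions.

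A secondary, smaller gap: \Cref{lem:ds-hellinger-v2} compares a data distribution $P$ to the distribution $Q$ generated by running the reverse SDE with a candidate score $g$, so a single invocation cannot directly yield $D_H^2\big(d^{\pi^{(1:K)}},d^{\pi^e}\big)=\widetilde{O}(\epsilon)$. The paper introduces the intermediate distribution $\hat d^{\pi^e}$ produced by the reverse process driven by $g^e$, applies the lemma twice (to $d^{\pi^{(1:K)}}$ using the bound on $\ell(\pi^{(1:K)})$, and to $d^{\pi^e}$ using \Cref{it:expert} of \Cref{asm:error}), combines the two via the triangle inequality for Hellinger, and tunes $T=\log\big((m+d)/(\epsilon_{\mathrm{score}}^2+\epsilon_{\mathrm{RL}}+\epsilon_{\mathrm{mis}})\big)$ to absorb the $(m+d)e^{-T}$ term into the logarithmic factors; your sketch should make these two applications and the choice of $T$ explicit. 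Your final conversion steps (the min over the two variances via symmetry of Hellinger, and deducing the first-order bound from the second-order one) agree with the paper's argument.
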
 
Here the misspecification error $\epsilon_\@{mis}$ measures the minimum possible DS divergence to the pre-trained expert score $g^e$. We note that $\epsilon_\@{mis}$ is algorithmic-path independent once given $g^e$, i.e., it does not depend on what the algorithm does --- it is a quantity that is pre-determined when the IL problem is formalized and the expert score $g^e$ is pre-trained. If we increase the expressiveness of $\Pi$, $\epsilon_\@{mis}$ will decrease. In contrast, interactive IL algorithms DAgger ~\citep{ross2011reduction} and AggreVate(D) \citep{ross2014reinforcement,sun2017deeply}, which claim to have no compounding errors, actually have algorithmic-path dependent misspecification errors. This means their misspecification error does not necessarily decrease and in fact can increase when one increases the capacity of $\Pi$ (since it affects the algorithm's behavior).

It is known that the second-order bound is tighter and subsumes the first-order bound~\citep{wang2024more}. Importantly, previous results on second-order bounds for sequential decision-making have relied exclusively on the Maximum Likelihood Estimator (MLE)~\citep{wang2023benefits,wang2024more,foster2024behavior,wang2024central,wang2024model},
which in general is not computationally tractable even for simple exponential family distributions \citep{pabbaraju2024provable}.
In this work, we show for the first time that a computationally tractable alternative --- diffusion score matching --- can achieve these instance-dependent bounds in the context of IRL. 
In particular, our second-order bounds (or the first-order bound) scale with the \emph{minimum} of $\@{Var}^{\pi^e}$ and $\@{Var}^{\pi^{(1:K)}}$ (or minimum of $V^{\pi^e}$ and $V^{\pi^{(1:K)}}$). Hence, our bounds are sharper whenever one of the two is small. In contrast, bounds in these prior RL work do not scale with the min of variances associated with the learned policies and the comparator policy $\pi^e$.  This subtlety is likely due to the difference between the IRL setting and the general RL setting instead of techniques in the analysis. 
\looseness=-1

\noindent\textbf{Improved sample efficiency with respect to the size of expert data.}
Let's now focus on sample complexity with respect to expert data, which is typically the most expensive data to collect in IL, to highlight the benefits of our method over previous approaches. For conciseness, we ignore all errors that are purely related to computation. Specifically, we set $\epsilon_{\text{RL}} \rightarrow 0$, as it only depends on the time spent to run the RL procedure,  let the number of iterations $K \rightarrow \infty$ so $\text{Regret}(K) / K \rightarrow 0$, and assume $\pi^e\in\Pi$. 
Under these conditions, our bound solely depends on $\epsilon_{\text{score}}$. For simplicity, we further assume a finite score function class $\+G$, neglect misspecification errors (i.e., $\nabla \log p^\pi_t(x) \in \Gcal$ for all $\pi$), 
and assume the magnitude of the score loss is $O(1)$. Now standard regression analysis~\citep{agarwal2019reinforcement,oko2023diffusion} yields $\epsilon_{\text{score}} = O(\sqrt{\log(|\mathcal{G}|)/N})$ where $N$ is the number of expert samples.
Plugging this back into our second-order bound, we obtain 
\begin{align*}
    \frac{1}{K} \sum_{k=1}^K V^{\pi^{(k)}} - V^{\pi^e} = O\left(\sqrt{\min\big(\@{Var}^{\pi^e},\@{Var}^{\pi^{(1:K)}}\big) \cdot \frac{\log(|\mathcal{G}|)}{N}} + \frac{H\cdot \log(|\mathcal{G}|)}{N} \right).
\end{align*}
When the expert's cost or the learned policy's cost has low variance, i.e., $\@{Var}^{\pi^e}\to 0$ or $\@{Var}^{\pi^{(1:K)}}\to 0$, our bound simplifies to $O(H\cdot \log(|\mathcal{G}|)/N)$. 
This shows a significant improvement over prior sample complexity bounds for IRL with general function approximation, which are typically $O(H\sqrt{\log(|\+G|)/N})$~(e.g., IPM-based methods~\citep{kidambi2021mobile,chang2021mitigating}). This demonstrates the benefit of using diffusion processes and score matching over IPMs. \looseness=-1

In addition to the sample benefit above, we show our approach can require less expressive function approximators than discriminator-based methods.
Please refer to \Cref{sec:linear} for more details.

\section{Experiments}

We evaluate our method, \algname{}, on a set of continuous control tasks from the Deepmind Control Suite \citep{tassa2018deepmind} and HumanoidBench \citep{sferrazza2024humanoidbench}. These tasks vary in difficulty, as summarized in \Cref{fig:env-difficulty} of \Cref{sec:exp-detail}. %
While we will compare to multiple baselines including non-adversarial baselines, we focus on testing our hypothesis that score-matching based approach should outperform $f$-divergence-based approach. Additional results and details are in \Cref{sec:exp-detail}.

\noindent\textbf{Baselines.}
We include Discriminator Actor-Critic (DAC), {IQ-Learn}, {DiffAIL}, and Behavior Cloning (BC) as baselines. Our DAC implementation is based on the original design by \citet{kostrikov2018discriminator} that uses JS-divergence GAN-style objective.  
{We use the official implementation of IQ-Learn. We tried several configurations provided in their implementation for the MuJoCo tasks and reported the best one. DiffAIL also follows official implementation.}
The BC performance is reported as the maximum episode reward obtained across all checkpoints during training. We found MLE outperforms squared loss for BC, so all BC results below use MLE.

\noindent\textbf{Diffusion Models.}
We employ the Denoising Diffusion Probabilistic Model (DDPM) as the diffusion model. The score function is modeled via an MLP with one hidden layer of 256 units, and we use the same architecture for the discriminator in DAC. The diffusion process is discretized into 5K steps and we use a learnable time embedding of 16 dimensions. 
We approximate the cost function (Eq.~\ref{eq:cost}) via empirical mean using 500 samples. Since the scale of the score matching loss (and consequently our cost function) is sensitive to the data scale (in this context, the scale of state vector), we normalize the cost of each batch to have zero mean and a standard deviation of 0.1.%

\noindent\textbf{RL Solvers.}
For DMC tasks, we use Soft Actor-Critic (SAC) \citep{haarnoja2018soft} as the RL solver for both DAC and our method, based on the implementation provided by \citet{pytorch_sac}. For HumanoidBench tasks, we adopt DreamerV3~\citep{hafner2023mastering} using the implementation from the HumanoidBench repository~\citep{sferrazza2024humanoidbench}. Additionally, we maintain a separate ``state buffer'' that operates the same as the replay buffer but only stores states. This buffer is used to sample states to update the score function in our method and the discriminator in DAC.

\textbf{Overall, we keep the implementation of \algname{} and the baseline DAC as close as possible (e.g., using the same RL solver) so they only differ in objective functions} (i.e., \algname{} learns score functions while DAC learns discriminators to approximate JS divergence). This allows us to show the benefits of our score-matching approach to the discriminator-based GAN-style ones. %

\noindent\textbf{Expert Policy and Demonstrations.}
We use the aforementioned RL solvers to train expert policies and collect demonstrations by running the expert policy for five episodes. Each task has a fixed 1K-step horizon without early termination, and thus the resulting dataset has 5K steps per task.

\subsection{Learning from State-only Demonstrations}

\begin{figure}[t]
    \vspace{-1.5em}
    \begin{center}
    \includegraphics[width=\columnwidth]{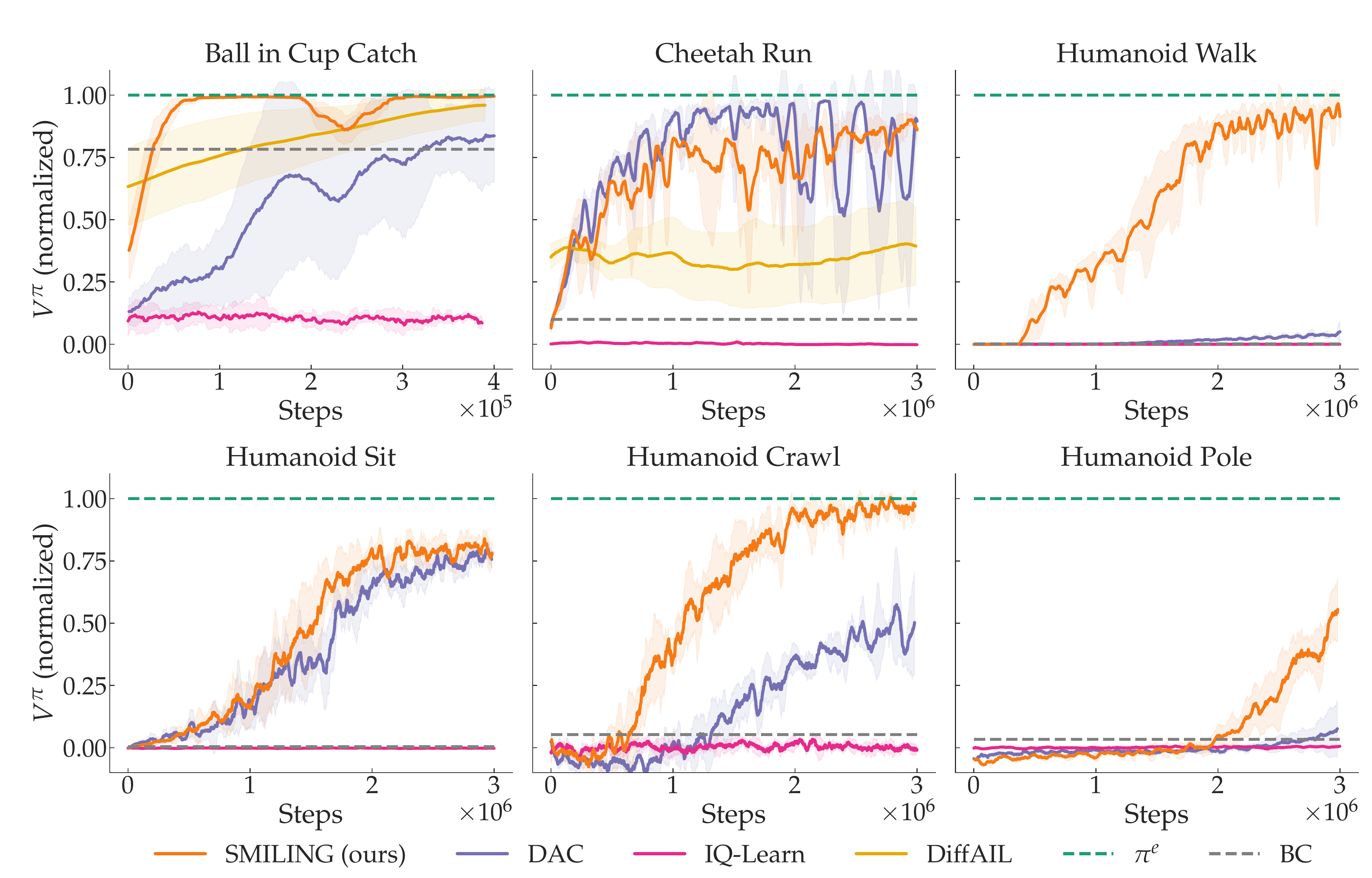}
    \end{center}
    \vspace{-1em}
    \caption{Learning curves for learning from state-only data across five random seeds. The x-axis corresponds to the number of environment steps (also the number of policy updates). The y-axis is normalized such that the expert performance is one and the random policy is zero. Our method clearly outperforms all baselines in five tasks out of six.}
    \label{fig:exp-state}
    \vspace{-1em}
\end{figure}

\Cref{fig:exp-state} illustrates the results across six control tasks when the expert demonstrations are state-only. We trained for 0.4M steps on \texttt{Ball-in-cup-catch} and for 3M steps on all other tasks. Our method matches expert performance and outperforms all baselines on five of them. Specifically, our method outperforms all baselines by a large margin on four. For \texttt{humanoid-sit}, while none achieves expert performance, ours converges the fastest. For \texttt{cheetah-run}, although DAC initially shows faster convergence, it becomes unstable soon and oscillates significantly after 2M steps; in contrast, our algorithm remains much more stable and surpasses DAC in 3M steps. Notably, we observed DiffAIL collapses at the beginning of training on all four humanoid tasks due to numerical issues (neural network weights diverging to infinity). Hence, we were unable to report the those results. Also note that, in the plot, we include BC as a reference though it is not directly comparable since it uses state-action demonstrations. Here BC serves as a performance upper bound for Behavioral Cloning from Observation (BCO)~\citep{torabi2018behavioral}, which infers the expert's actions from state-only data before training. Nevertheless, we observe that both our method and DAC consistently outperform BC (and thus, BCO).
We provide video demos of \algname{} and DAC on the \texttt{humanoid-crawl} and \texttt{humanoid-pole} tasks in \Cref{fig:demo} to give a clear visual comparison of their performance differences.

\subsection{Additional Results}

\noindent\textbf{Learning from State-Action Demonstrations.}
Our method can be easily extended to learning from state-action demonstrations by appending the action vector to the state vector for both training and computing rewards. Hence, we also explore its performance in this setting. The results are consistent with the state-only setting and can be found in \Cref{sec:learn-sa}.

\noindent\textbf{Number of Expert Demonstrations.}
We also investigate the impact of the number of expert demonstrations on the performance of our method and baselines in order to support our theoretical results in \Cref{sec:theory}. The results can be found in \Cref{sec:num-exp-demos} and show that our method is clearly more sample-efficient and performs well when the number of expert demonstrations is small.

\noindent\textbf{Expressive of Diffusion Models.}
In \Cref{sec:linear}, we argued that score matching is more expressive than discriminator-based methods, which may explain why our approach generally outperforms DAC. To support it, we conducted an ablation study on the \texttt{cheetah-run} task where ours converged slower than DAC previously. We removed the activation functions in both the discriminator of DAC and the score function of our method to make them \textit{purely linear}. We found that DAC's performance degrades notably while our method remains more effective and outperforms DAC clearly. Detailed results are in \Cref{sec:exp-linear}. This reinforces our claim about the expressiveness of score matching.

\section{Conclusion}

We propose a new IL framework, Score-Matching Imitation Learning (\algname{}), that leverages the diffusion score function to learn from expert demonstrations. Unlike previous methods, \algname{} is not formulated from any $f$-divergence or IPM perspective. Instead, it directly matches the score of the learned policy to that of the expert. Theoretically, we show that score function is more expressive than learning a discriminator on $f$-divergence. Additionally, our method achieves first- and second-order bounds in sample complexity, which are instance-dependent and tighter than previous results. In practice, our method outperforms existing algorithms on a set of continuous control tasks.

\section*{Acknowledgments}
GKS was supported in part by a STTR grant. 
WS acknowledges support from NSF IIS-2154711, NSF CAREER 2339395, and DARPA LANCER: LeArning Network CybERagents.
This work has been made possible in part by a gift from the Chan Zuckerberg Initiative Foundation to establish the Kempner Institute for the Study of Natural and Artificial Intelligence.

\bibliography{iclr2025_conference}

\begin{thebibliography}{82}
\providecommand{\natexlab}[1]{#1}
\providecommand{\url}[1]{\texttt{#1}}
\expandafter\ifx\csname urlstyle\endcsname\relax
  \providecommand{\doi}[1]{doi: #1}\else
  \providecommand{\doi}{doi: \begingroup \urlstyle{rm}\Url}\fi

\bibitem[Abbeel \& Ng(2004)Abbeel and Ng]{abbeel2004apprenticeship}
Pieter Abbeel and Andrew~Y Ng.
\newblock Apprenticeship learning via inverse reinforcement learning.
\newblock In \emph{Proceedings of the twenty-first international conference on Machine learning}, pp.\ ~1, 2004.

\bibitem[Agarwal et~al.(2019)Agarwal, Jiang, Kakade, and Sun]{agarwal2019reinforcement}
Alekh Agarwal, Nan Jiang, Sham~M Kakade, and Wen Sun.
\newblock Reinforcement learning: Theory and algorithms.
\newblock \emph{CS Dept., UW Seattle, Seattle, WA, USA, Tech. Rep}, 32:\penalty0 96, 2019.

\bibitem[Ajay et~al.(2022)Ajay, Du, Gupta, Tenenbaum, Jaakkola, and Agrawal]{ajay2022conditional}
Anurag Ajay, Yilun Du, Abhi Gupta, Joshua~B Tenenbaum, Tommi~S Jaakkola, and Pulkit Agrawal.
\newblock Is conditional generative modeling all you need for decision making?
\newblock In \emph{The Eleventh International Conference on Learning Representations}, 2022.

\bibitem[Al-Hafez et~al.(2023)Al-Hafez, Tateo, Arenz, Zhao, and Peters]{al2023ls}
Firas Al-Hafez, Davide Tateo, Oleg Arenz, Guoping Zhao, and Jan Peters.
\newblock Ls-iq: Implicit reward regularization for inverse reinforcement learning.
\newblock In \emph{ICLR}, 2023.

\bibitem[Anderson(1982)]{anderson1982reverse}
Brian~DO Anderson.
\newblock Reverse-time diffusion equation models.
\newblock \emph{Stochastic Processes and their Applications}, 12\penalty0 (3):\penalty0 313--326, 1982.

\bibitem[Arjovsky \& Bottou(2017)Arjovsky and Bottou]{arjovsky2017towards}
Martin Arjovsky and Leon Bottou.
\newblock Towards principled methods for training generative adversarial networks.
\newblock In \emph{International Conference on Learning Representations}, 2017.
\newblock URL \url{https://openreview.net/forum?id=Hk4_qw5xe}.

\bibitem[Barde et~al.(2020)Barde, Roy, Jeon, Pineau, Pal, and Nowrouzezahrai]{barde2020adversarial}
Paul Barde, Julien Roy, Wonseok Jeon, Joelle Pineau, Chris Pal, and Derek Nowrouzezahrai.
\newblock Adversarial soft advantage fitting: Imitation learning without policy optimization.
\newblock \emph{Advances in Neural Information Processing Systems}, 33:\penalty0 12334--12344, 2020.

\bibitem[Barnes et~al.(2023)Barnes, Abueg, Lange, Deeds, Trader, Molitor, Wulfmeier, and O'Banion]{barnes2023massively}
Matt Barnes, Matthew Abueg, Oliver~F Lange, Matt Deeds, Jason Trader, Denali Molitor, Markus Wulfmeier, and Shawn O'Banion.
\newblock Massively scalable inverse reinforcement learning in google maps.
\newblock In \emph{The Twelfth International Conference on Learning Representations}, 2023.

\bibitem[Block et~al.(2023)Block, Jadbabaie, Pfrommer, Simchowitz, and Tedrake]{block2023provableguaranteesgenerativebehavior}
Adam Block, Ali Jadbabaie, Daniel Pfrommer, Max Simchowitz, and Russ Tedrake.
\newblock Provable guarantees for generative behavior cloning: Bridging low-level stability and high-level behavior.
\newblock \emph{Advances in Neural Information Processing Systems}, 36:\penalty0 48534--48547, 2023.

\bibitem[Brantley et~al.(2019)Brantley, Sun, and Henaff]{brantley2019disagreement}
Kiante Brantley, Wen Sun, and Mikael Henaff.
\newblock Disagreement-regularized imitation learning.
\newblock In \emph{International Conference on Learning Representations}, 2019.

\bibitem[Chang et~al.(2021)Chang, Uehara, Sreenivas, Kidambi, and Sun]{chang2021mitigating}
Jonathan Chang, Masatoshi Uehara, Dhruv Sreenivas, Rahul Kidambi, and Wen Sun.
\newblock Mitigating covariate shift in imitation learning via offline data with partial coverage.
\newblock \emph{Advances in Neural Information Processing Systems}, 34:\penalty0 965--979, 2021.

\bibitem[Chang et~al.(2024)Chang, Sreenivas, Huang, Brantley, and Sun]{chang2024adversarial}
Jonathan~Daniel Chang, Dhruv Sreenivas, Yingbing Huang, Kiant{\'e} Brantley, and Wen Sun.
\newblock Adversarial imitation learning via boosting.
\newblock In \emph{The Twelfth International Conference on Learning Representations}, 2024.

\bibitem[Chen et~al.(2023{\natexlab{a}})Chen, Lee, and Lu]{chen2023improved}
Hongrui Chen, Holden Lee, and Jianfeng Lu.
\newblock Improved analysis of score-based generative modeling: User-friendly bounds under minimal smoothness assumptions.
\newblock In \emph{International Conference on Machine Learning}, pp.\  4735--4763. PMLR, 2023{\natexlab{a}}.

\bibitem[Chen \& Jiang(2019)Chen and Jiang]{chen2019information}
Jinglin Chen and Nan Jiang.
\newblock Information-theoretic considerations in batch reinforcement learning.
\newblock In \emph{International Conference on Machine Learning}, pp.\  1042--1051. PMLR, 2019.

\bibitem[Chen et~al.(2024)Chen, Wang, Hsu, Lai, and Sun]{chen2023diffusion}
Shang-Fu Chen, Hsiang-Chun Wang, Ming-Hao Hsu, Chun-Mao Lai, and Shao-Hua Sun.
\newblock Diffusion model-augmented behavioral cloning.
\newblock In \emph{Proceedings of the 41st International Conference on Machine Learning}, pp.\  7486--7510, 2024.

\bibitem[Chen et~al.(2023{\natexlab{b}})Chen, Chewi, Li, Li, Salim, and Zhang]{chen2022sampling}
Sitan Chen, Sinho Chewi, Jerry Li, Yuanzhi Li, Adil Salim, and Anru~R Zhang.
\newblock Sampling is as easy as learning the score: theory for diffusion models with minimal data assumptions.
\newblock In \emph{International Conference on Learning Representations}, 2023{\natexlab{b}}.

\bibitem[Chi et~al.(2023)Chi, Xu, Feng, Cousineau, Du, Burchfiel, Tedrake, and Song]{chi2023diffusion}
Cheng Chi, Zhenjia Xu, Siyuan Feng, Eric Cousineau, Yilun Du, Benjamin Burchfiel, Russ Tedrake, and Shuran Song.
\newblock Diffusion policy: Visuomotor policy learning via action diffusion.
\newblock \emph{The International Journal of Robotics Research}, pp.\  02783649241273668, 2023.

\bibitem[Dadashi et~al.(2021)Dadashi, Hussenot, Geist, and Pietquin]{dadashi2021primal}
Robert Dadashi, L{\'e}onard Hussenot, Matthieu Geist, and Olivier Pietquin.
\newblock Primal wasserstein imitation learning.
\newblock In \emph{ICLR 2021-Ninth International Conference on Learning Representations}, 2021.

\bibitem[Finn et~al.(2016)Finn, Christiano, Abbeel, and Levine]{finn2016connection}
Chelsea Finn, Paul Christiano, Pieter Abbeel, and Sergey Levine.
\newblock A connection between generative adversarial networks, inverse reinforcement learning, and energy-based models.
\newblock \emph{arXiv preprint arXiv:1611.03852}, 2016.

\bibitem[Foster et~al.(2024)Foster, Block, and Misra]{foster2024behavior}
Dylan~J Foster, Adam Block, and Dipendra Misra.
\newblock Is behavior cloning all you need? understanding horizon in imitation learning.
\newblock In \emph{The Thirty-eighth Annual Conference on Neural Information Processing Systems}, 2024.

\bibitem[Fu et~al.(2017)Fu, Luo, and Levine]{fu2017learning}
Justin Fu, Katie Luo, and Sergey Levine.
\newblock Learning robust rewards with adversarial inverse reinforcement learning.
\newblock \emph{arXiv preprint arXiv:1710.11248}, 2017.

\bibitem[Garg et~al.(2021)Garg, Chakraborty, Cundy, Song, and Ermon]{garg2021iq}
Divyansh Garg, Shuvam Chakraborty, Chris Cundy, Jiaming Song, and Stefano Ermon.
\newblock Iq-learn: Inverse soft-q learning for imitation.
\newblock \emph{Advances in Neural Information Processing Systems}, 34:\penalty0 4028--4039, 2021.

\bibitem[Ghasemipour et~al.(2020)Ghasemipour, Zemel, and Gu]{ghasemipour2020divergence}
Seyed Kamyar~Seyed Ghasemipour, Richard Zemel, and Shixiang Gu.
\newblock A divergence minimization perspective on imitation learning methods.
\newblock In \emph{Conference on robot learning}, pp.\  1259--1277. PMLR, 2020.

\bibitem[Goodfellow et~al.(2020)Goodfellow, Pouget-Abadie, Mirza, Xu, Warde-Farley, Ozair, Courville, and Bengio]{goodfellow2020generative}
Ian Goodfellow, Jean Pouget-Abadie, Mehdi Mirza, Bing Xu, David Warde-Farley, Sherjil Ozair, Aaron Courville, and Yoshua Bengio.
\newblock Generative adversarial networks.
\newblock \emph{Communications of the ACM}, 63\penalty0 (11):\penalty0 139--144, 2020.

\bibitem[Haarnoja et~al.(2018)Haarnoja, Zhou, Abbeel, and Levine]{haarnoja2018soft}
Tuomas Haarnoja, Aurick Zhou, Pieter Abbeel, and Sergey Levine.
\newblock Soft actor-critic: Off-policy maximum entropy deep reinforcement learning with a stochastic actor.
\newblock In \emph{International conference on machine learning}, pp.\  1861--1870. PMLR, 2018.

\bibitem[Hafner et~al.(2023)Hafner, Pasukonis, Ba, and Lillicrap]{hafner2023mastering}
Danijar Hafner, Jurgis Pasukonis, Jimmy Ba, and Timothy Lillicrap.
\newblock Mastering diverse domains through world models.
\newblock \emph{arXiv preprint arXiv:2301.04104}, 2023.

\bibitem[Ho \& Ermon(2016)Ho and Ermon]{ho2016generative}
Jonathan Ho and Stefano Ermon.
\newblock Generative adversarial imitation learning.
\newblock \emph{Advances in neural information processing systems}, 29, 2016.

\bibitem[Ho et~al.(2020)Ho, Jain, and Abbeel]{ho2020denoising}
Jonathan Ho, Ajay Jain, and Pieter Abbeel.
\newblock Denoising diffusion probabilistic models.
\newblock \emph{Advances in neural information processing systems}, 33:\penalty0 6840--6851, 2020.

\bibitem[Huang et~al.(2024)Huang, Yang, Lai, Wu, and Sun]{huang2024diffusion}
Bo-Ruei Huang, Chun-Kai Yang, Chun-Mao Lai, Dai-Jie Wu, and Shao-Hua Sun.
\newblock Diffusion imitation from observation.
\newblock \emph{arXiv preprint arXiv:2410.05429}, 2024.

\bibitem[Jain et~al.(2024)Jain, Wiltzer, Farebrother, Rish, Berseth, and Choudhury]{jain2024revisiting}
Arnav~Kumar Jain, Harley Wiltzer, Jesse Farebrother, Irina Rish, Glen Berseth, and Sanjiban Choudhury.
\newblock Revisiting successor features for inverse reinforcement learning.
\newblock In \emph{ICML 2024 Workshop on Models of Human Feedback for AI Alignment}, 2024.

\bibitem[Janner et~al.(2022)Janner, Du, Tenenbaum, and Levine]{janner2022planning}
Michael Janner, Yilun Du, Joshua~B Tenenbaum, and Sergey Levine.
\newblock Planning with diffusion for flexible behavior synthesis.
\newblock \emph{arXiv preprint arXiv:2205.09991}, 2022.

\bibitem[Johnson(2004)]{johnson2004information}
Oliver Johnson.
\newblock \emph{Information theory and the central limit theorem}.
\newblock World Scientific, 2004.

\bibitem[Karatzas \& Shreve(2014)Karatzas and Shreve]{karatzas2014brownian}
Ioannis Karatzas and Steven Shreve.
\newblock \emph{Brownian motion and stochastic calculus}, volume 113.
\newblock springer, 2014.

\bibitem[Ke et~al.(2021)Ke, Choudhury, Barnes, Sun, Lee, and Srinivasa]{ke2021imitation}
Liyiming Ke, Sanjiban Choudhury, Matt Barnes, Wen Sun, Gilwoo Lee, and Siddhartha Srinivasa.
\newblock Imitation learning as f-divergence minimization.
\newblock In \emph{Algorithmic Foundations of Robotics XIV: Proceedings of the Fourteenth Workshop on the Algorithmic Foundations of Robotics 14}, pp.\  313--329. Springer, 2021.

\bibitem[Kidambi et~al.(2021)Kidambi, Chang, and Sun]{kidambi2021mobile}
Rahul Kidambi, Jonathan Chang, and Wen Sun.
\newblock Mobile: Model-based imitation learning from observation alone.
\newblock \emph{Advances in Neural Information Processing Systems}, 34:\penalty0 28598--28611, 2021.

\bibitem[Kong et~al.(2021)Kong, Ping, Huang, Zhao, and Catanzaro]{kong2020diffwave}
Zhifeng Kong, Wei Ping, Jiaji Huang, Kexin Zhao, and Bryan Catanzaro.
\newblock Diffwave: A versatile diffusion model for audio synthesis.
\newblock In \emph{International Conference on Learning Representations}, 2021.

\bibitem[Kostrikov et~al.(2019)Kostrikov, Agrawal, Dwibedi, Levine, and Tompson]{kostrikov2018discriminator}
Ilya Kostrikov, Kumar~Krishna Agrawal, Debidatta Dwibedi, Sergey Levine, and Jonathan Tompson.
\newblock Discriminator-actor-critic: Addressing sample inefficiency and reward bias in adversarial imitation learning.
\newblock In \emph{International Conference on Learning Representations}, 2019.

\bibitem[Lai et~al.(2024)Lai, Wang, Hsieh, Wang, Chen, and Sun]{lai2024diffusion}
Chun-Mao Lai, Hsiang-Chun Wang, Ping-Chun Hsieh, Yu-Chiang~Frank Wang, Min-Hung Chen, and Shao-Hua Sun.
\newblock Diffusion-reward adversarial imitation learning.
\newblock \emph{arXiv preprint arXiv:2405.16194}, 2024.

\bibitem[Lee et~al.(2022)Lee, Lu, and Tan]{lee2022convergence}
Holden Lee, Jianfeng Lu, and Yixin Tan.
\newblock Convergence for score-based generative modeling with polynomial complexity.
\newblock \emph{Advances in Neural Information Processing Systems}, 35:\penalty0 22870--22882, 2022.

\bibitem[Liu et~al.(2021)Liu, He, Xu, and Zhang]{liu2020energy}
Minghuan Liu, Tairan He, Minkai Xu, and Weinan Zhang.
\newblock Energy-based imitation learning.
\newblock In \emph{Proceedings of the 20th International Conference on Autonomous Agents and MultiAgent Systems}, pp.\  809--817, 2021.

\bibitem[Miyato et~al.(2018)Miyato, Kataoka, Koyama, and Yoshida]{miyato2018spectral}
Takeru Miyato, Toshiki Kataoka, Masanori Koyama, and Yuichi Yoshida.
\newblock Spectral normalization for generative adversarial networks.
\newblock In \emph{International Conference on Learning Representations}, 2018.

\bibitem[M{\"u}ller(1997)]{muller1997integral}
Alfred M{\"u}ller.
\newblock Integral probability metrics and their generating classes of functions.
\newblock \emph{Advances in applied probability}, 29\penalty0 (2):\penalty0 429--443, 1997.

\bibitem[Ni et~al.(2021)Ni, Sikchi, Wang, Gupta, Lee, and Eysenbach]{ni2021f}
Tianwei Ni, Harshit Sikchi, Yufei Wang, Tejus Gupta, Lisa Lee, and Ben Eysenbach.
\newblock f-irl: Inverse reinforcement learning via state marginal matching.
\newblock In \emph{Conference on Robot Learning}, pp.\  529--551. PMLR, 2021.

\bibitem[Nuti et~al.(2024)Nuti, Franzmeyer, and Henriques]{nuti2024extracting}
Felipe Nuti, Tim Franzmeyer, and Jo{\~a}o~F Henriques.
\newblock Extracting reward functions from diffusion models.
\newblock \emph{Advances in Neural Information Processing Systems}, 36, 2024.

\bibitem[Oko et~al.(2023)Oko, Akiyama, and Suzuki]{oko2023diffusion}
Kazusato Oko, Shunta Akiyama, and Taiji Suzuki.
\newblock Diffusion models are minimax optimal distribution estimators.
\newblock In \emph{International Conference on Machine Learning}, pp.\  26517--26582. PMLR, 2023.

\bibitem[Osa et~al.(2018)Osa, Pajarinen, Neumann, Bagnell, Abbeel, Peters, et~al.]{osa2018algorithmic}
Takayuki Osa, Joni Pajarinen, Gerhard Neumann, J~Andrew Bagnell, Pieter Abbeel, Jan Peters, et~al.
\newblock An algorithmic perspective on imitation learning.
\newblock \emph{Foundations and Trends{\textregistered} in Robotics}, 7\penalty0 (1-2):\penalty0 1--179, 2018.

\bibitem[Pabbaraju et~al.(2024)Pabbaraju, Rohatgi, Sevekari, Lee, Moitra, and Risteski]{pabbaraju2024provable}
Chirag Pabbaraju, Dhruv Rohatgi, Anish~Prasad Sevekari, Holden Lee, Ankur Moitra, and Andrej Risteski.
\newblock Provable benefits of score matching.
\newblock \emph{Advances in Neural Information Processing Systems}, 36, 2024.

\bibitem[Pearce et~al.(2023)Pearce, Rashid, Kanervisto, Bignell, Sun, Georgescu, Macua, Tan, Momennejad, Hofmann, et~al.]{pearce2023imitating}
Tim Pearce, Tabish Rashid, Anssi Kanervisto, Dave Bignell, Mingfei Sun, Raluca Georgescu, Sergio~Valcarcel Macua, Shan~Zheng Tan, Ida Momennejad, Katja Hofmann, et~al.
\newblock Imitating human behaviour with diffusion models.
\newblock \emph{arXiv preprint arXiv:2301.10677}, 2023.

\bibitem[Pomerleau(1988)]{pomerleau1988alvinn}
Dean~A Pomerleau.
\newblock Alvinn: An autonomous land vehicle in a neural network.
\newblock \emph{Advances in neural information processing systems}, 1, 1988.

\bibitem[Ramesh et~al.(2022)Ramesh, Dhariwal, Nichol, Chu, and Chen]{ramesh2022hierarchical}
Aditya Ramesh, Prafulla Dhariwal, Alex Nichol, Casey Chu, and Mark Chen.
\newblock Hierarchical text-conditional image generation with clip latents.
\newblock \emph{arXiv preprint arXiv:2204.06125}, 1\penalty0 (2):\penalty0 3, 2022.

\bibitem[Ratliff et~al.(2006)Ratliff, Bagnell, and Zinkevich]{ratliff2006maximum}
Nathan~D Ratliff, J~Andrew Bagnell, and Martin~A Zinkevich.
\newblock Maximum margin planning.
\newblock In \emph{Proceedings of the 23rd international conference on Machine learning}, pp.\  729--736, 2006.

\bibitem[Reddy et~al.(2020)Reddy, Dragan, and Levine]{reddy2019sqil}
Siddharth Reddy, Anca~D Dragan, and Sergey Levine.
\newblock Sqil: Imitation learning via reinforcement learning with sparse rewards.
\newblock In \emph{International Conference on Learning Representations}, 2020.

\bibitem[Ren et~al.(2024)Ren, Swamy, Wu, Bagnell, and Choudhury]{ren2024hybrid}
Juntao Ren, Gokul Swamy, Zhiwei~Steven Wu, J~Andrew Bagnell, and Sanjiban Choudhury.
\newblock Hybrid inverse reinforcement learning.
\newblock \emph{arXiv preprint arXiv:2402.08848}, 2024.

\bibitem[Rombach et~al.(2022)Rombach, Blattmann, Lorenz, Esser, and Ommer]{rombach2022high}
Robin Rombach, Andreas Blattmann, Dominik Lorenz, Patrick Esser, and Bj{\"o}rn Ommer.
\newblock High-resolution image synthesis with latent diffusion models.
\newblock In \emph{Proceedings of the IEEE/CVF conference on computer vision and pattern recognition}, pp.\  10684--10695, 2022.

\bibitem[Ross \& Bagnell(2014)Ross and Bagnell]{ross2014reinforcement}
Stephane Ross and J~Andrew Bagnell.
\newblock Reinforcement and imitation learning via interactive no-regret learning.
\newblock \emph{arXiv preprint arXiv:1406.5979}, 2014.

\bibitem[Ross et~al.(2011)Ross, Gordon, and Bagnell]{ross2011reduction}
St{\'e}phane Ross, Geoffrey Gordon, and Drew Bagnell.
\newblock A reduction of imitation learning and structured prediction to no-regret online learning.
\newblock In \emph{Proceedings of the fourteenth international conference on artificial intelligence and statistics}, pp.\  627--635. JMLR Workshop and Conference Proceedings, 2011.

\bibitem[Sapora et~al.(2024)Sapora, Swamy, Lu, Teh, and Foerster]{sapora2024evil}
Silvia Sapora, Gokul Swamy, Chris Lu, Yee~Whye Teh, and Jakob~Nicolaus Foerster.
\newblock Evil: Evolution strategies for generalisable imitation learning.
\newblock \emph{arXiv preprint arXiv:2406.11905}, 2024.

\bibitem[Sferrazza et~al.(2024)Sferrazza, Huang, Lin, Lee, and Abbeel]{sferrazza2024humanoidbench}
Carmelo Sferrazza, Dun-Ming Huang, Xingyu Lin, Youngwoon Lee, and Pieter Abbeel.
\newblock Humanoidbench: Simulated humanoid benchmark for whole-body locomotion and manipulation.
\newblock \emph{arXiv preprint arXiv:2403.10506}, 2024.

\bibitem[Shalev-Shwartz et~al.(2012)]{shalev2012online}
Shai Shalev-Shwartz et~al.
\newblock Online learning and online convex optimization.
\newblock \emph{Foundations and Trends{\textregistered} in Machine Learning}, 4\penalty0 (2):\penalty0 107--194, 2012.

\bibitem[Sikchi et~al.(2024{\natexlab{a}})Sikchi, Zhang, and Niekum]{sikchi2024dual}
H~Sikchi, A~Zhang, and S~Niekum.
\newblock Dual rl: Unification and new methods for reinforcement and imitation learning.
\newblock In \emph{International Conference on Learning Representations}. International Conference on Learning Representations, 2024{\natexlab{a}}.

\bibitem[Sikchi et~al.(2024{\natexlab{b}})Sikchi, Chuck, Zhang, and Niekum]{sikchi2024dual2}
Harshit Sikchi, Caleb Chuck, Amy Zhang, and Scott Niekum.
\newblock A dual approach to imitation learning from observations with offline datasets.
\newblock In \emph{8th Annual Conference on Robot Learning}, 2024{\natexlab{b}}.

\bibitem[Song \& Ermon(2019)Song and Ermon]{song2019generative}
Yang Song and Stefano Ermon.
\newblock Generative modeling by estimating gradients of the data distribution.
\newblock \emph{Advances in neural information processing systems}, 32, 2019.

\bibitem[Song et~al.(2021)Song, Sohl-Dickstein, Kingma, Kumar, Ermon, and Poole]{song2020score}
Yang Song, Jascha Sohl-Dickstein, Diederik~P Kingma, Abhishek Kumar, Stefano Ermon, and Ben Poole.
\newblock Score-based generative modeling through stochastic differential equations.
\newblock In \emph{International Conference on Learning Representations}, 2021.

\bibitem[Song et~al.(2024)Song, Bagnell, and Singh]{song2024hybrid}
Yuda Song, Drew Bagnell, and Aarti Singh.
\newblock Hybrid reinforcement learning from offline observation alone.
\newblock In \emph{International Conference on Machine Learning}, pp.\  46019--46049. PMLR, 2024.

\bibitem[Sun et~al.(2017)Sun, Venkatraman, Gordon, Boots, and Bagnell]{sun2017deeply}
Wen Sun, Arun Venkatraman, Geoffrey~J Gordon, Byron Boots, and J~Andrew Bagnell.
\newblock Deeply aggrevated: Differentiable imitation learning for sequential prediction.
\newblock In \emph{International conference on machine learning}, pp.\  3309--3318. PMLR, 2017.

\bibitem[Sun et~al.(2019)Sun, Vemula, Boots, and Bagnell]{sun2019provably}
Wen Sun, Anirudh Vemula, Byron Boots, and Drew Bagnell.
\newblock Provably efficient imitation learning from observation alone.
\newblock In \emph{International conference on machine learning}, pp.\  6036--6045. PMLR, 2019.

\bibitem[Swamy et~al.(2021)Swamy, Choudhury, Bagnell, and Wu]{swamy2021moments}
Gokul Swamy, Sanjiban Choudhury, J~Andrew Bagnell, and Steven Wu.
\newblock Of moments and matching: A game-theoretic framework for closing the imitation gap.
\newblock In \emph{International Conference on Machine Learning}, pp.\  10022--10032. PMLR, 2021.

\bibitem[Swamy et~al.(2023)Swamy, Wu, Choudhury, Bagnell, and Wu]{swamy2023inverse}
Gokul Swamy, David Wu, Sanjiban Choudhury, Drew Bagnell, and Steven Wu.
\newblock Inverse reinforcement learning without reinforcement learning.
\newblock In \emph{International Conference on Machine Learning}, pp.\  33299--33318. PMLR, 2023.

\bibitem[Tassa et~al.(2018)Tassa, Doron, Muldal, Erez, Li, Casas, Budden, Abdolmaleki, Merel, Lefrancq, et~al.]{tassa2018deepmind}
Yuval Tassa, Yotam Doron, Alistair Muldal, Tom Erez, Yazhe Li, Diego de~Las Casas, David Budden, Abbas Abdolmaleki, Josh Merel, Andrew Lefrancq, et~al.
\newblock Deepmind control suite.
\newblock \emph{arXiv preprint arXiv:1801.00690}, 2018.

\bibitem[Team et~al.(2024)Team, Ghosh, Walke, Pertsch, Black, Mees, Dasari, Hejna, Kreiman, Xu, et~al.]{team2024octo}
Octo~Model Team, Dibya Ghosh, Homer Walke, Karl Pertsch, Kevin Black, Oier Mees, Sudeep Dasari, Joey Hejna, Tobias Kreiman, Charles Xu, et~al.
\newblock Octo: An open-source generalist robot policy.
\newblock \emph{arXiv preprint arXiv:2405.12213}, 2024.

\bibitem[Topsoe(2000)]{topsoe2000some}
Flemming Topsoe.
\newblock Some inequalities for information divergence and related measures of discrimination.
\newblock \emph{IEEE Transactions on information theory}, 46\penalty0 (4):\penalty0 1602--1609, 2000.

\bibitem[Torabi et~al.(2018)Torabi, Warnell, and Stone]{torabi2018behavioral}
Faraz Torabi, Garrett Warnell, and Peter Stone.
\newblock Behavioral cloning from observation.
\newblock In \emph{Proceedings of the Twenty-Seventh International Joint Conference on Artificial Intelligence}. International Joint Conferences on Artificial Intelligence Organization, 2018.

\bibitem[Uehara et~al.(2021)Uehara, Imaizumi, Jiang, Kallus, Sun, and Xie]{uehara2021finite}
Masatoshi Uehara, Masaaki Imaizumi, Nan Jiang, Nathan Kallus, Wen Sun, and Tengyang Xie.
\newblock Finite sample analysis of minimax offline reinforcement learning: Completeness, fast rates and first-order efficiency.
\newblock \emph{arXiv preprint arXiv:2102.02981}, 2021.

\bibitem[Wang et~al.(2024{\natexlab{a}})Wang, Wu, Pang, Zhang, and Yin]{wang2024diffail}
Bingzheng Wang, Guoqiang Wu, Teng Pang, Yan Zhang, and Yilong Yin.
\newblock Diffail: Diffusion adversarial imitation learning.
\newblock In \emph{Proceedings of the AAAI Conference on Artificial Intelligence}, volume~38, pp.\  15447--15455, 2024{\natexlab{a}}.

\bibitem[Wang et~al.(2023)Wang, Zhou, Wu, Kallus, and Sun]{wang2023benefits}
Kaiwen Wang, Kevin Zhou, Runzhe Wu, Nathan Kallus, and Wen Sun.
\newblock The benefits of being distributional: Small-loss bounds for reinforcement learning.
\newblock \emph{Advances in Neural Information Processing Systems}, 36, 2023.

\bibitem[Wang et~al.(2024{\natexlab{b}})Wang, Kallus, and Sun]{wang2024central}
Kaiwen Wang, Nathan Kallus, and Wen Sun.
\newblock The central role of the loss function in reinforcement learning.
\newblock \emph{arXiv preprint arXiv:2409.12799}, 2024{\natexlab{b}}.

\bibitem[Wang et~al.(2024{\natexlab{c}})Wang, Oertell, Agarwal, Kallus, and Sun]{wang2024more}
Kaiwen Wang, Owen Oertell, Alekh Agarwal, Nathan Kallus, and Wen Sun.
\newblock More benefits of being distributional: second-order bounds for reinforcement learning.
\newblock In \emph{Proceedings of the 41st International Conference on Machine Learning}, pp.\  51192--51213, 2024{\natexlab{c}}.

\bibitem[Wang et~al.(2025)Wang, Zhou, Lui, and Sun]{wang2024model}
Zhiyong Wang, Dongruo Zhou, John~CS Lui, and Wen Sun.
\newblock Model-based rl as a minimalist approach to horizon-free and second-order bounds.
\newblock In \emph{The Thirteenth International Conference on Learning Representations}, 2025.

\bibitem[Wulfmeier et~al.(2024)Wulfmeier, Bloesch, Vieillard, Ahuja, Bornschein, Huang, Sokolov, Barnes, Desjardins, Bewley, et~al.]{wulfmeier2024imitating}
Markus Wulfmeier, Michael Bloesch, Nino Vieillard, Arun Ahuja, Jorg Bornschein, Sandy Huang, Artem Sokolov, Matt Barnes, Guillaume Desjardins, Alex Bewley, et~al.
\newblock Imitating language via scalable inverse reinforcement learning.
\newblock \emph{arXiv preprint arXiv:2409.01369}, 2024.

\bibitem[Yarats \& Kostrikov(2020)Yarats and Kostrikov]{pytorch_sac}
Denis Yarats and Ilya Kostrikov.
\newblock Soft actor-critic (sac) implementation in pytorch.
\newblock \url{https://github.com/denisyarats/pytorch_sac}, 2020.

\bibitem[Zhang et~al.(2020)Zhang, Li, Zhang, and Zhang]{zhang2020f}
Xin Zhang, Yanhua Li, Ziming Zhang, and Zhi-Li Zhang.
\newblock f-gail: Learning f-divergence for generative adversarial imitation learning.
\newblock \emph{Advances in neural information processing systems}, 33:\penalty0 12805--12815, 2020.

\bibitem[Ziebart et~al.(2008)Ziebart, Maas, Bagnell, Dey, et~al.]{ziebart2008maximum}
Brian~D Ziebart, Andrew~L Maas, J~Andrew Bagnell, Anind~K Dey, et~al.
\newblock Maximum entropy inverse reinforcement learning.
\newblock In \emph{Aaai}, volume~8, pp.\  1433--1438. Chicago, IL, USA, 2008.

\end{thebibliography}
\bibliographystyle{iclr2025_conference}

\appendix

\newpage

\section{Comparison to Discriminator-based (f-divergence and IPM) Methods}\label{sec:linear}
In this section, we compare DS divergence with $f$-divergence, particularly its discriminator-based approximation (the key metric used by IL algorithms such as GAIL and DAC):
\begin{align*}
\max_{f\in\+F} \;\; \E_{s\sim p} \big[ \log f(s) \big] + \E_{s\sim q} \big[ \log (1 - f(s)) \big]
\end{align*}
where $\+F\subseteq (0,1)^{\+S}$ is a restricted function class. It is known that the above is a lower bound of the Jensen-Shannon (JS) divergence between $p$ and $q$. In particular, when $p/(p+q)\in\+F$, it is exactly equal to the JS divergence. However, representing $p/(p+q)$ may be challenging.
Below we show why score-matching is more preferred than discriminator-based $f$-divergence via the example of exponential family distributions,

\paragraph{On the expressiveness of score functions and discriminators.}
Consider the distributions $p$ and $q$ to be from the exponential family,  $p(s) = \exp(w^T \phi(s)) / Z_p$ (and similar for $q$), where $\phi$ is a quadratic feature map, and $Z_p$ is the partition function. In this case, the score function $\nabla_s \log p(s) = \nabla_s \phi(s)$ is simply linear in $s$. 
In contrast, the optimal discriminator for JS divergence of $p$ and $q$ has the form of $p / (p + q)$, which is inherently nonlinear and cannot be captured by a linear function on $s$. As a result, if we use linear functions to model discriminators, the discriminator-based objective cannot even faithfully serve as a tight lower bound for the JS divergence. Thus minimizing a loose lower bound of the JS-divergence does not imply minimizing the JS-divergence itself.  \looseness=-1

Now  consider integral probability metric (IPM), i.e., $\max_{f\in\Fcal} \left( \EE_{s\sim q} f(s) - \EE_{s\sim p} f(s)\right)$. When $p$ and $q$ are from the exponential family, to perfectly distinguish $q$ and $p$, one needs to design a discriminator in the form of $\theta^\top \phi(x)$ since $\phi$ is the sufficient statistics of the exponential family distribution. In our example, $\phi$ is a quadratic feature mapping, meaning that we need a non-linear discriminator for IPM as well.

Hence, score matching only needs relatively weaker score function class to represent complicated distributions to control certain powerful divergence including KL divergence and Hellinger distance. However, for $f$-divergence-based or IPM-based methods, we need a more expressive discriminator function class to make them serve as a tight lower bound of the divergences. This perhaps explains why in practice for generative models, score matching based approach in general outperforms more traditional GAN based approach.  
In our experiment, we conduct an ablation study in \Cref{sec:exp-linear} to compare our approach to a GAN-based baseline under linear function approximation for score functions and discriminators, and we show that \algname{} can still learn well even when the score functions are linear with respect to the state $s$.

\paragraph{Can $f$-divergence or IPM-based approaches achieve  second-order bounds?} \cite{wang2024central} has demonstrated that simply matching expectations (first moment) is insufficient to achieve second-order bounds in a simpler supervised learning setting. Mapping this to the IL setting, if the true cost is linear in feature $\phi(s)$, i.e., $c^\star(s) = \theta^\star\cdot \phi(s)$,  and we design discriminator $g(s) = \theta^\top \phi(s)$, then IL algorithm that learns a policy $\pi$ to only match policy's expected feature $\EE_{s\sim \pi} \phi(s)$ to the expert's $\EE_{s\sim \pi^e}\phi(s)$ cannot achieve a second-order regret bound.
For f-divergence such as JS-divergence, unless the discriminator class is rich enough to include the optimal discriminator, the discriminator-based objective is only a loose lower bound of the true $f$-divergence.  On the other hand, making the discriminator class rich enough to capture the optimal discriminator can increase both statistical and computational complexity for optimizing the discriminator. Taking the exponential family distribution as an example again,  there the optimal discriminator relies on the partition functions $Z_q$ and $Z_p$ which can make optimization intractable \citep{pabbaraju2024provable}.

\section{Error Analysis of Objective}\label{sec:err}
In this section, we show the following two things: (1) why directly replacing the diffusion score function of $\pi$ with an estimator $g^\pi$ in $\ell(\pi)$ (Eq.~\ref{eq:ideal_obj}) incurs large error; (2) why the error is small in our derived objective $\^\ell(\pi)$.

On the one hand, replacing the score function of $\pi$ with $g^\pi$ incurs the following error:
\begin{align*}
    & \E_{s\sim d^\pi} \E_{t\sim U(T)} \E_{s_t\sim q_t(\cdot \given s)} \left[ \left\| g^e(s_t, t)  - g^\pi (s_t)\right\|_2^2 - \left\| g^e(s_t, t)  - \nabla_{s_t} \log p_t^\pi (s_t)\right\|_2^2 \right] \\
    & = \E_{s\sim d^\pi} \E_{t\sim U(T)} \E_{s_t\sim q_t(\cdot \given s)} \Big[ \underbrace{\left(  \nabla_{s_t} \log p_t^\pi (s_t) - g^\pi (s_t)\right)}_{(a)} \underbrace{\left( 2 g^e(s_t, t)  - g^\pi (s_t) - \nabla_{s_t} \log p_t^\pi (s_t)\right)}_{(b)} \Big].
\end{align*}
Here the scale of term (a) is bounded by the statistical error from score matching. However, the scale of term (b) is unbounded since $g^e$ can arbitrarily deviate from both $g^\pi$ and $\nabla_{s_t} \log p_t^\pi (s_t)$. Hence, the error is potentially unbounded.

On the other hand, our derived objective is bounded. To see this, we note that our objective incur the following error:
\begin{align*}
    &\E_{s\sim d^\pi} \E_{t\sim U(T)} \E_{s_t\sim q_t(\cdot \given s)} \left[ \left\| \nabla_{s_t} \log p_t^\pi (s_t)  - \nabla_{s_t} \log q_t (s_t \given s)\right\|_2^2 - \left\| g^\pi(s_t)  - \nabla_{s_t} \log q_t (s_t \given s)\right\|_2^2 \right]\\
    &=\E_{s\sim d^\pi} \E_{t\sim U(T)} \E_{s_t\sim q_t(\cdot \given s)} \left[ \left\| \nabla_{s_t} \log p_t^\pi (s_t)  - g^\pi (s_t)\right\|_2^2 \right].
\end{align*}
Here the equality is by \Cref{lem:score-prob2}. We observe that it is exactly bounded by the statistical error from score matching.

\section{Proof of Theorem \ref{thm:main}}\label{sec:pf}

\subsection{Supporting Lemmas}

The following lemma is from \citet{wang2024more} and the fact that triangular discrimination is equivalent to the squared Hellinger distance up to a multiplicative constant: $2 D_H^2 \leq D_\triangle \leq 4 D_H^2$\citep{topsoe2000some}.
\begin{lemma}\citep[Lemma 4.3]{wang2024more}\label{lem:triangle}
    For two distributions $p,q\in[0,1]$ over some random variable, denote their respective means by $\-p$ and $\-q$. Then, it holds that
    \begin{align*}
        \left| \-p - \-q \right| \leq 8 \sqrt{\@{Var}(p) D^2_H(p, q)} + 20 D^2_H(p, q)
    \end{align*}
    where $\@{Var}(\cdot)$ denotes the variance, and $D^2_H(p, q) \coloneqq \int ( \sqrt{p(x)} - \sqrt{q(x)} )^2 \d x / 2$ is the squared Hellinger distance. 
\end{lemma}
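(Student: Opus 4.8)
The plan is to prove the inequality first in terms of the triangular discrimination $D_\triangle(p,q) = \int (p(x)-q(x))^2/(p(x)+q(x))\,\d x$ and then convert to the squared Hellinger distance through the stated comparison $D_\triangle \le 4 D^2_H$ (this conversion is exactly how the header frames the lemma, so the Hellinger constants $8$ and $20$ will come out with slack). The starting identity is $\-p - \-q = \int (x - \-p)(p(x) - q(x))\,\d x$, which holds because $\int (p - q)\,\d x = 0$ lets us recenter the linear weight by any constant; centering at $\-p$ is the choice that ties the final bound to $\@{Var}(p)$ rather than $\@{Var}(q)$.

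Next I would split $p - q = \frac{p-q}{\sqrt{p+q}}\cdot\sqrt{p+q}$ and apply Cauchy--Schwarz to obtain $|\-p - \-q| \le \sqrt{B}\,\sqrt{D_\triangle(p,q)}$, where $B := \int (x-\-p)^2 (p+q)\,\d x$. The crux is bounding $B$ by $\@{Var}(p)$ plus a lower-order multiple of $D_\triangle$. Writing $B = 2\@{Var}(p) + \int (x-\-p)^2 (q - p)\,\d x$ and bounding the cross term by a second Cauchy--Schwarz, $|\int (x-\-p)^2(q-p)\,\d x| \le \sqrt{\int (x-\-p)^4 (p+q)\,\d x}\,\sqrt{D_\triangle} \le \sqrt{B}\,\sqrt{D_\triangle}$, where the last step uses $(x-\-p)^4 \le (x-\-p)^2$ since $x,\-p\in[0,1]$. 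This yields the self-referential inequality $B \le 2\@{Var}(p) + \sqrt{B}\,\sqrt{D_\triangle}$.

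Treating this as a quadratic in $\sqrt B$ gives $\sqrt B \le \sqrt{2\@{Var}(p)} + \sqrt{D_\triangle}$, hence $B \le 4\@{Var}(p) + 2 D_\triangle$. Substituting back and using $\sqrt{a+b}\le\sqrt a + \sqrt b$ produces $|\-p-\-q| \le 2\sqrt{\@{Var}(p)\,D_\triangle} + \sqrt2\,D_\triangle$, and finally $D_\triangle \le 4D^2_H$ turns this into $|\-p-\-q| \le 4\sqrt{\@{Var}(p)\,D^2_H(p,q)} + 4\sqrt2\,D^2_H(p,q)$, comfortably inside the claimed constants $8$ and $20$.

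The main obstacle is the term $B$: a naive expansion $\int (x-\-p)^2 q\,\d x = \@{Var}(q) + (\-p - \-q)^2$ re-introduces $\@{Var}(q)$ and the very quantity $(\-p-\-q)^2$ that we are trying to bound, so the estimate would not close. The resolution is the self-referential Cauchy--Schwarz step, which crucially exploits boundedness of the random variable ($x\in[0,1]$, so $(x-\-p)^4\le(x-\-p)^2$) and then solves out $B$ through the quadratic. Alternatively, one may simply cite the triangular-discrimination form in \citet{wang2024more} and carry out only the final $D_\triangle \le 4D^2_H$ substitution, which is what the surrounding text suggests.
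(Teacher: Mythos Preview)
Your argument is correct and self-contained. The paper itself does not prove this lemma: it simply attributes the triangular-discrimination version to \citet{wang2024more} and then invokes $D_\triangle \le 4 D_H^2$ from \citet{topsoe2000some} to pass to the stated Hellinger form---exactly the ``alternative'' you describe in your last sentence. So your final step matches the paper's entire treatment, while the preceding Cauchy--Schwarz / quadratic-in-$\sqrt{B}$ argument is additional work that the paper outsources to the citation; your derived constants $4$ and $4\sqrt{2}$ are indeed well inside the stated $8$ and $20$.
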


The following lemma is adapted from some known results from diffusion processes and score matching. It shows that as long as the DS divergence (\Cref{def:ds}) is small, the Hellinger distance is also small under some mild conditions. Similar results can be found in some theoretical papers on diffusion models such as \citet{chen2022sampling,oko2023diffusion}.

\begin{lemma}\label{lem:ds-hellinger-v2}
    Let $P\in\Delta(\=R^d)$ and $g\in\=R^d\times[0,T]\rightarrow\=R^d$. Define $Q\in\Delta(\=R^d)$ as the distribution obtained through the reverse process of diffusion starting from $\+N(0,I)$ by treating $g$ as the ``score function'' of the process. Specifically, $Q$ is the distribution of $\zb_T$ of the following SDE:
    \begin{align*}
        \d \zb_t = & (\zb_t + 2 g(\zb_t, T - t)) \d t + \sqrt{2} \d B_t,\, \zb_0 \sim \+N(0,I).
    \end{align*}
    We assume the following:
    \begin{enumerate}[label=(\alph*)]
        \item\label{it:a} For all $t\geq 0$, the score $\nabla\log P_t$ is Lipschitz continuous with finite constant;
        \item\label{it:b} The second moment of $P$ is upper bounded: $\E_{\xb \sim P} \big[ \left\| \xb \right\|_2^2 \big] \leq  m  $ for some $ m  > 0$.
    \end{enumerate}
    Then, if
    \begin{align*}
        \E_{\xb \sim P} \E_{t\sim U(T)} \E_{\xb_t \sim q_t(\cdot \given \xb)} \left[ \left\| g(\xb_t, t) - \nabla_{\xb_t} \log P_t(\xb_t) \right\|_2^2 \right] \leq \epsilon^2,
    \end{align*}
    for some $\epsilon > 0$, we have 
    \begin{align*}
        D_H^2(P, Q) = O \left( T \epsilon^2 + ( m  + d) \exp(-T) \right).
    \end{align*}
  \end{lemma}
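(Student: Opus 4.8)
\textbf{Proof plan for \Cref{lem:ds-hellinger-v2}.}

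The plan is to bound $D_H^2(P,Q)$ by chaining through the KL divergence and invoking the standard convergence analysis of diffusion-based samplers. First I would introduce the auxiliary distribution $\widehat{Q}$ obtained by running the \emph{exact} reverse SDE (i.e.\ with the true score $\nabla\log P_{T-t}$ in place of $g$) but initialized at $\+N(0,I)$ rather than at $P_T$. By the triangle-like inequality for Hellinger distance (or, more conveniently, by passing to KL and using $D_H^2 \le \tfrac12 D_{\mathrm{KL}}$ together with a data-processing/subadditivity step), it suffices to control two pieces: (i) the discrepancy between $P$ and $\widehat{Q}$, which is driven purely by the initialization mismatch $P_T$ vs.\ $\+N(0,I)$; and (ii) the discrepancy between $\widehat{Q}$ and $Q$, which is driven purely by the score approximation error between $\nabla\log P_{T-t}$ and $g$.

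For piece (i), the forward OU process converges exponentially fast to its stationary distribution $\+N(0,I)$; under assumption \ref{it:b} (bounded second moment of $P$, $\E_{\xb\sim P}\|\xb\|_2^2 \le m$), the standard estimate gives $D_{\mathrm{KL}}(P_T \,\|\, \+N(0,I)) = O((m+d)\exp(-T))$. Since the reverse SDE is a (stochastic) map, processing both initializations through it can only decrease the divergence, so this contributes the $(m+d)\exp(-T)$ term. For piece (ii), I would apply Girsanov's theorem to the path measures of the two reverse SDEs — one with drift involving $g$, one with drift involving the true score — so that the KL divergence between the terminal laws is bounded by the expected squared drift difference integrated over the time interval, namely $\int_0^T \E\|g(\cdot,T-t) - \nabla\log P_{T-t}(\cdot)\|_2^2 \, dt$ taken under the exact reverse process. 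The crucial observation is that along the exact reverse process the marginal at reverse-time $t$ is exactly $P_{T-t}$, so this integral equals $T \cdot \E_{\xb\sim P}\E_{t\sim U(T)}\E_{\xb_t\sim q_t(\cdot|\xb)}\|g(\xb_t,t)-\nabla\log P_t(\xb_t)\|_2^2 \le T\epsilon^2$ by hypothesis. Combining via $D_H^2 \le \tfrac12 D_{\mathrm{KL}}$ and the (approximate) triangle inequality for $D_H$ yields $D_H^2(P,Q) = O(T\epsilon^2 + (m+d)\exp(-T))$.

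The main obstacle is making the Girsanov argument rigorous: one needs assumption \ref{it:a} (Lipschitz continuity of the score $\nabla\log P_t$ for all $t$) to guarantee that Novikov's condition holds and that the change of measure is legitimate — i.e.\ that the drift difference has finite exponential moments along the relevant measure — and to ensure the reverse SDE is well-posed with a strong solution. A secondary technical point is handling the small-time behavior near $t=0$ (equivalently reverse-time near $T$), where score functions can blow up; in the continuous-time formulation with the Lipschitz assumption this is benign, but if one wanted to track discretization this is where the Lipschitz constant and $m$ would re-enter (as the paper notes in the discussion following \Cref{asm:diffusion}). I would therefore state the result in continuous time, cite the analogous estimates in \citet{chen2022sampling,oko2023diffusion} for the Girsanov and initialization bounds, and simply assemble the two contributions. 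The remaining steps — converting between KL and squared Hellinger, and the approximate triangle inequality — are routine.
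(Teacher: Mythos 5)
Your overall toolkit is the right one (approximate triangle inequality for Hellinger, $D_H^2\lesssim D_{\mathrm{KL}}$, Girsanov for the drift mismatch, data processing plus exponential convergence of the OU process for the initialization mismatch), but the specific decomposition you chose breaks the Girsanov step. You take the intermediate law $\widehat{Q}$ to be the \emph{exact-drift} reverse SDE initialized at $\+N(0,I)$, and then bound $D_{\mathrm{KL}}$ between $\widehat{Q}$ and $Q$ by the time-integrated expected squared drift difference ``along the exact reverse process,'' claiming its reverse-time-$t$ marginal is exactly $P_{T-t}$. That identity holds only when the exact-drift reverse SDE is initialized at $P_T$; in your piece (ii) it is initialized at $\+N(0,I)$, so its marginals are \emph{not} $P_{T-t}$, and the Girsanov integral is an expectation under the wrong measure. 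The hypothesis only controls $\E_{\xb\sim P}\E_{t}\E_{\xb_t\sim q_t(\cdot\given\xb)}\|g-\nabla\log P_t\|_2^2$, i.e.\ the error under the true diffused marginals, and since $\|g-\nabla\log P_t\|_2^2$ is not assumed bounded you cannot transfer this expectation to the Gaussian-initialized process using only the small KL divergence $D_{\mathrm{KL}}(P_T\,\|\,\+N(0,I))=O((m+d)e^{-T})$ (that would require, e.g., exponential moments via Donsker--Varadhan, which you do not have). The difficulty you flag as the ``main obstacle'' (Novikov/well-posedness) is not the real issue; the measure mismatch is.

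The fix is to reorder the decomposition, which is exactly what the paper does: take the intermediate process to be the \emph{approximate}-drift ($g$) reverse SDE initialized at $P_T$. Then the Girsanov step compares the exact-drift and $g$-drift SDEs both launched from $P_T$, and the expectation in the Girsanov bound is taken along the exact reverse process started at $P_T$, whose marginals are exactly $P_{T-t}$, so it is bounded by $T\epsilon^2$ directly from the hypothesis. The initialization mismatch is then handled between the two $g$-drift SDEs (started at $P_T$ versus $\+N(0,I)$), where data processing needs no knowledge of the drift and yields the $O((m+d)e^{-T})$ term via the OU convergence bound. With that swap, the rest of your argument (Hellinger via KL, approximate triangle inequality, Lipschitz score for the Girsanov change of measure) goes through as in the paper.
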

  
\begin{proof}[Proof of \Cref{lem:ds-hellinger-v2}]
  We consider the following three stochastic processes specified by SDEs:
  \begin{align*}
    \d \xb_t = & (\xb_t + 2 \nabla \log P_{T-t}(\xb_t)) \d t + \sqrt{2} \d B_t,\, \xb_0 \sim P_T;\\
    \d \yb_t = & (\yb_t + 2 g (\yb_t, T-t)) \d t + \sqrt{2} \d B_t,\, \yb_0 \sim P_T;\\
    \d \zb_t = & (\zb_t + 2 g (\zb_t, T-t)) \d t + \sqrt{2} \d B_t,\, \zb_0 \sim \+N(0,I).
  \end{align*}
  To clarify the notation: the variables $\xb_t, \yb_t, \zb_t$ above are defined in the reverse time order, whereas in the lemma statement, $\xb$ follow forward time order. 
  
  By triangle inequality for Hellinger distance:
  \begin{align*}
    D_H^2(P, Q) 
    &= D_H^2(\xb_T, \zb_T) \le \big(D_H(\xb_T, \yb_T) + D_H(\yb_T, \zb_T)\big)^2\\
    &\le 2D_H^2(\xb_T, \yb_T) + 2D_H^2(\yb_T, \zb_T).
  \end{align*}
  We will bound the two terms separately. Since the squared Hellinger distance is upper bounded by KL divergence (i.e., $D_H^2 \leq D_\@{KL}$), we seek to establish the KL divergence bounds instead.
  
  First, by \Cref{it:a} and Girsanov's Theorem~\citep{karatzas2014brownian} (also see, e.g., \citet{chen2022sampling,oko2023diffusion}), we have 
  \begin{align*}
    D_H^2(\xb_T, \yb_T) \le D_\@{KL}(\xb_T \ggiven \yb_T) = O \left( T \cdot \E_{t\sim U(T)} \Big\| \nabla\log P_{T-t}(\xb_t) - g(\xb_t, T-t) \Big\|_2^2 \right) = O\left( T \epsilon^2 \right).
  \end{align*}
  Next, by \Cref{it:b} and the convergence of OU process (e.g., Lemma 9 in \citet{chen2023improved}), we have:
  \begin{align*}
    D_\@{KL}( P_T \ggiven \Ncal(0,I)) & = O\big( ( m  + d) \cdot \exp(-T) \big).
  \end{align*}
  By data-processing inequality for $f$-divergence, we have  %
  \begin{align*}
    D_H^2(\yb_T, \zb_T) 
    & \le D_H^2(P_T , \+N(0,I) )   \\
    & \le D_\@{KL}(P_T \ggiven \+N(0,I) )   \\
    & = O \big( ( m  + d) \cdot \exp(-T) \big).
  \end{align*} 
  Plugging them back, we complete the proof. 
\end{proof}

The following lemmas are standard results from diffusion processes and score matching. We show them here for completeness.
\begin{lemma}\label{lem:score-prob}
    Given any $g$, we have  
    \begin{align*}
        \E_{\xb_t \sim p_t} \Big\langle g(\xb_t, t), \nabla_{\xb_t} \log q_t(\xb_t) \Big\rangle
        =
        \E_{\xb_0\sim p_0}\E_{\xb_t \sim p_t(\xb_t\given \xb_0)} \Big\langle g(\xb_t, t), \nabla_{\xb_t} \log q_t(\xb_t \given \xb_0) \Big\rangle
    \end{align*}
    where $\langle \cdot, \cdot \rangle$ denotes the inner product.
\end{lemma}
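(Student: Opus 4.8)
\textbf{Proof proposal for Lemma~\ref{lem:score-prob}.}

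The plan is to prove the identity by a direct manipulation of the left-hand side, reducing the marginal score $\nabla_{\xb_t}\log q_t(\xb_t)$ to the conditional score $\nabla_{\xb_t}\log q_t(\xb_t\given\xb_0)$ via the mixture representation $q_t(\xb_t) = \int q_t(\xb_t\given\xb_0)\,q_0(\xb_0)\,\d\xb_0$ together with the log-derivative (score) trick. Concretely, I would first write the left-hand side as an integral $\int p_t(\xb_t)\,\langle g(\xb_t,t),\nabla_{\xb_t}\log q_t(\xb_t)\rangle\,\d\xb_t$ and use $\nabla_{\xb_t}\log q_t(\xb_t) = \nabla_{\xb_t} q_t(\xb_t)/q_t(\xb_t)$. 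Then I would substitute $\nabla_{\xb_t} q_t(\xb_t) = \int \nabla_{\xb_t} q_t(\xb_t\given\xb_0)\,q_0(\xb_0)\,\d\xb_0 = \int q_t(\xb_t\given\xb_0)\,\nabla_{\xb_t}\log q_t(\xb_t\given\xb_0)\,q_0(\xb_0)\,\d\xb_0$ (differentiation under the integral sign, justified by the smoothness and Gaussian tails of $q_t(\cdot\given\xb_0)$).

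Next I would interchange the order of integration (Fubini, again justified by integrability coming from the Gaussian conditionals and the assumed $O(1)$-type regularity on $g$), so that the left-hand side becomes
\begin{align*}
\int q_0(\xb_0) \int \frac{p_t(\xb_t)}{q_t(\xb_t)}\, q_t(\xb_t\given\xb_0)\, \Big\langle g(\xb_t,t),\, \nabla_{\xb_t}\log q_t(\xb_t\given\xb_0)\Big\rangle \,\d\xb_t \,\d\xb_0.
\end{align*}
The remaining work is to identify the inner integral with $\E_{\xb_t\sim p_t(\xb_t\given\xb_0)}\langle g(\xb_t,t),\nabla_{\xb_t}\log q_t(\xb_t\given\xb_0)\rangle$. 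Here I expect the key point is that the lemma is implicitly using the fact that $p$ and $q$ share the \emph{same forward kernel}: the forward diffusion process (OU) is the same regardless of the initial distribution, so $p_t(\xb_t\given\xb_0) = q_t(\xb_t\given\xb_0)$. With this, the density ratio $p_t(\xb_t)/q_t(\xb_t)\cdot q_t(\xb_t\given\xb_0)$ should combine with $q_0(\xb_0)$ to reconstruct the joint $p_0(\xb_0)\,p_t(\xb_t\given\xb_0)$ up to the correct reweighting — and one checks that after integrating out $\xb_0$ against $q_0$ the outer measure indeed matches $p_0$, because $\int q_0(\xb_0)\,q_t(\xb_t\given\xb_0)\,\d\xb_0 = q_t(\xb_t)$ cancels the ratio's denominator, leaving $p_t(\xb_t)$, and then Bayes' rule rewrites $p_t(\xb_t)\,p_t(\xb_0\given\xb_t)$ as $p_0(\xb_0)\,p_t(\xb_t\given\xb_0)$. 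Assembling these gives exactly the right-hand side.

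The main obstacle I anticipate is bookkeeping the measure changes cleanly: the statement secretly conflates the $q$-conditional and the $p$-conditional (they coincide because the forward process is initialization-independent), and one has to be careful that the weighting $p_t(\xb_t)/q_t(\xb_t)$ is precisely what converts an expectation under $q_t$ into one under $p_t$ after marginalizing $\xb_0$. None of the steps is deep — differentiation under the integral, Fubini, and Bayes' rule — but writing them in the right order so the cancellations are transparent is where care is needed. An alternative, perhaps cleaner route is to avoid densities entirely: use Tweedie's identity / the score-matching fact that $\nabla_{\xb_t}\log q_t(\xb_t) = \E_{q}[\nabla_{\xb_t}\log q_t(\xb_t\given\xb_0)\mid \xb_t]$ (conditional expectation under the $q$-joint), then take $\E_{\xb_t\sim p_t}$ of the inner product and push the conditional expectation through, again using $p_t(\cdot\given\xb_0)=q_t(\cdot\given\xb_0)$ to rewrite the resulting $p_t$-expectation of a $q$-conditional expectation as an iterated expectation over $p_0$ then $p_t(\cdot\given\xb_0)$. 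I would present whichever of these two is shorter.
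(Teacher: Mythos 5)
Your core route---expanding $\nabla_{\xb_t}\log q_t(\xb_t)$ through the mixture representation $q_t(\xb_t)=\int q_t(\xb_t\given\xb_0)\,p_0(\xb_0)\,\d\xb_0$, differentiating under the integral, and applying Fubini---is sound, and it is genuinely different from the paper's proof: the paper instead integrates by parts twice, first moving the derivative onto $g$ against the marginal $p_t$ and then moving it back onto each conditional $p_t(\cdot\given\xb_0)$. Your version is arguably cleaner, since it never differentiates $g$ and needs no vanishing boundary terms; it is exactly the denoising-score-matching identity $\nabla\log p_t(\xb_t)=\E\big[\nabla_{\xb_t}\log q_t(\xb_t\given\xb_0)\,\big|\,\xb_t\big]$ followed by the tower rule. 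One clarification you should make explicit: in the paper's notation $q_t(\xb_t)$ with no conditioning is just the marginal of the same forward process started from $p_0$, i.e.\ $q_t(\xb_t)=p_t(\xb_t)$ (the paper's own proof silently rewrites it as $p_t$), so the ratio $p_t(\xb_t)/q_t(\xb_t)$ you carry through is identically one and all of the reweighting bookkeeping disappears.

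The part of your write-up that tries to cover genuinely distinct initial distributions (a separate $q_0$ with the shared OU kernel) contains a real error and should be dropped. The claim that integrating $q_0(\xb_0)\,q_t(\xb_t\given\xb_0)$ over $\xb_0$ ``cancels the ratio's denominator'' ignores that the integrand still depends on $\xb_0$ through $\nabla_{\xb_t}\log q_t(\xb_t\given\xb_0)$; what you actually obtain is an inner expectation under the $q$-posterior $q_t(\xb_0\given\xb_t)$, and the subsequent Bayes step silently replaces it by the $p$-posterior. These posteriors differ whenever $p_0\neq q_0$, even though the forward kernels coincide, and the identity is then simply false: with $d=1$, a fixed $t>0$, $g\equiv 1$, $p_0=\delta_1$ and $q_0=\delta_0$, the right-hand side equals $0$ while the left-hand side equals $-e^{-t}/(1-e^{-2t})\neq 0$. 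The same objection applies to your alternative route, where rewriting an $\E_{\xb_t\sim p_t}$ of a $q$-conditional expectation as an iterated expectation over $p_0$ and $p_t(\cdot\given\xb_0)$ again requires the two posteriors to agree. Once you commit to $q_t=p_t$ (the intended reading), your mixture-plus-Fubini argument is a complete and correct proof.
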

\begin{proof}[Proof of \Cref{lem:score-prob}]
    It basically follows from integration by parts. We start from the left-hand side (LHS):
    \begin{align*}
        \text{LHS} 
        & = \int_{\xb_t} g(\xb_t, t) \cdot \nabla_{\xb_t} \log p_t(\xb_t) \cdot p_t(x_t) \d \xb_t  \\
        & = \int_{\xb_t} g(\xb_t, t) \cdot \nabla_{\xb_t} p_t(\xb_t) \d \xb_t \\
        & = 0 - \int_{\xb_t} \nabla_{\xb_t} \cdot g(\xb_t, t) \cdot p_t(\xb_t) \d \xb_t  \tag {integration by parts} \\
        & = - \int_{\xb_0} \int_{\xb_t} \nabla_{\xb_t} \cdot g(\xb_t, t) \cdot p_t(\xb_t \given \xb_0) \cdot p_0(\xb_0) \d \xb_t \d \xb_0  \\
        & = 0 + \int_{\xb_0} \int_{\xb_t} g(\xb_t, t) \cdot \nabla_{\xb_t}  p_t(\xb_t \given \xb_0) \cdot p_0(\xb_0) \d \xb_t \d \xb_0 \tag{integration by parts again} \\
        & = \int_{\xb_0} \int_{\xb_t} g(\xb_t, t) \cdot \nabla_{\xb_t} \log p_t(\xb_t \given \xb_0) \cdot p_t(\xb_t \given \xb_0) \cdot  p_0(\xb_0) \d \xb_t \d \xb_0 \\
        & = \text{RHS}.
    \end{align*}
    Hence, the lemma is proved.
\end{proof}
\begin{lemma}\label{lem:score-prob2}
    Given any $g$, we have 
    \begin{align*}
        & \E_{\xb_t \sim p_t} \Big\| g(\xb_t, t) - \nabla_{\xb_t} \log p_t(\xb_t) \Big\|^2_2  - \E_{\xb_0\sim p_0,\xb_t\sim p_t(\xb_t \given \xb_0)} \Big\| g(\xb_t, t) - \nabla_{\xb_t} \log p_t(\xb_t \given \xb_0) \Big\|^2_2 \\
        & \qquad = \E_{\xb_t} \Big\| \nabla_{\xb_t} \log p_t(\xb_t) \Big\|^2_2 - \E_{\xb_0,\xb_t} \Big\| \nabla_{\xb_t} \log p_t(\xb_t\given \xb_0) \Big\|^2_2
    \end{align*}
    where we note that the right-hand side of the equation is independent of $g$.
\end{lemma}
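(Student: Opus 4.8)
The statement to prove is \Cref{lem:score-prob2}, which asserts that the difference between the ``marginal'' score-matching loss and the ``conditional'' score-matching loss, for any fixed $g$, equals a quantity independent of $g$ (namely, the difference of the expected squared norms of the marginal and conditional scores).

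\textbf{Plan.} The idea is to expand both squared norms on the left-hand side and track which terms depend on $g$. Write $\|g - \nabla\log p_t\|_2^2 = \|g\|_2^2 - 2\langle g, \nabla\log p_t\rangle + \|\nabla\log p_t\|_2^2$, and similarly $\|g - \nabla\log p_t(\cdot\given \xb_0)\|_2^2 = \|g\|_2^2 - 2\langle g, \nabla\log p_t(\cdot\given\xb_0)\rangle + \|\nabla\log p_t(\cdot\given\xb_0)\|_2^2$. When we take the appropriate expectations and subtract, the $\|g\|_2^2$ terms cancel because $\E_{\xb_t\sim p_t}\|g(\xb_t,t)\|_2^2 = \E_{\xb_0\sim p_0}\E_{\xb_t\sim p_t(\cdot\given\xb_0)}\|g(\xb_t,t)\|_2^2$ (both are just $\E$ over the joint/marginal of $\xb_t$, which agree). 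The $\|\nabla\log p_t\|_2^2$ and $\|\nabla\log p_t(\cdot\given\xb_0)\|_2^2$ terms are exactly what appears on the right-hand side. So it remains to show the cross terms cancel, i.e., $\E_{\xb_t\sim p_t}\langle g(\xb_t,t),\nabla_{\xb_t}\log p_t(\xb_t)\rangle = \E_{\xb_0\sim p_0}\E_{\xb_t\sim p_t(\cdot\given\xb_0)}\langle g(\xb_t,t),\nabla_{\xb_t}\log p_t(\xb_t\given\xb_0)\rangle$.

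\textbf{Key step.} That cross-term identity is precisely \Cref{lem:score-prob} (just proven above via integration by parts), so I would simply invoke it. Concretely: starting from the left-hand side of \Cref{lem:score-prob2}, expand the two squares, use linearity of expectation to group terms into (i) the $\|g\|_2^2$ difference, which vanishes since both expectations are over the same law of $\xb_t$; (ii) the $-2$ times cross-term difference, which vanishes by \Cref{lem:score-prob}; and (iii) the score-norm difference, which is exactly the right-hand side. Assembling these three observations completes the proof in a few lines.

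\textbf{Main obstacle.} There is essentially no deep obstacle — the content is entirely bookkeeping, and all the analytic work (the integration-by-parts argument establishing the cross-term identity) has already been done in \Cref{lem:score-prob}. The only point requiring a moment's care is justifying that $\E_{\xb_t\sim p_t}\|g(\xb_t,t)\|_2^2$ equals $\E_{\xb_0\sim p_0}\E_{\xb_t\sim p_t(\cdot\given\xb_0)}\|g(\xb_t,t)\|_2^2$, which follows immediately from the fact that marginalizing the joint distribution of $(\xb_0,\xb_t)$ over $\xb_0$ gives the marginal $p_t$ of $\xb_t$; one should also implicitly assume enough integrability (e.g., $g$ and the scores are square-integrable against the relevant measures) for these expectations to be finite and the manipulations valid, which is consistent with the regularity assumptions used elsewhere in the paper.
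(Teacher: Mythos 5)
Your proposal is correct and follows essentially the same route as the paper's proof: expand the squared norms, cancel the $\E\|g\|_2^2$ terms using the fact that the marginal of $\xb_t$ agrees under both expectations, and handle the cross terms via \Cref{lem:score-prob}, leaving exactly the score-norm difference on the right-hand side. The paper merely organizes the algebra slightly differently (expanding only the first term and completing the square in conditional form), but the content is identical.
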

\begin{proof}[Proof of \Cref{lem:score-prob2}]
    We prove the lemma by expanding the first term on the left-hand side:
    \begin{align*}
        & \E_{\xb_t \sim p_t} \Big\| g(\xb_t, t) - \nabla_{\xb_t} \log p_t(\xb_t) \Big\|^2_2 \\
        & = \E_{\xb_t \sim p_t} \Big\| g(\xb_t, t) \Big\|^2_2 - 2 \E_{\xb_t \sim p_t} \Big\langle g(\xb_t, t), \nabla_{\xb_t} \log p_t(\xb_t) \Big\rangle + \E_{\xb_t \sim p_t} \Big\| \nabla_{\xb_t} \log p_t(\xb_t) \Big\|^2_2 \\
        & = \E_{\xb_0,\xb_t} \Big\| g(\xb_t, t) \Big\|^2_2 - 2 \E_{\xb_0,\xb_t} \Big\langle g(\xb_t, t), \nabla_{\xb_t} \log p_t(\xb_t \given \xb_0) \Big\rangle + \E_{\xb_t} \Big\| \nabla_{\xb_t} \log p_t(\xb_t) \Big\|^2_2 \tag{\Cref{lem:score-prob}} \\
        & = \E_{\xb_0,\xb_t} \Big\| g(\xb_t, t) \Big\|^2_2 - 2 \E_{\xb_0,\xb_t} \Big\langle g(\xb_t, t), \nabla_{\xb_t} \log p_t(\xb_t \given \xb_0) \Big\rangle + \E_{\xb_0,\xb_t} \Big\| \nabla_{\xb_t} \log p_t(\xb_t\given \xb_0) \Big\|^2_2 \\
        & \qquad - \E_{\xb_0,\xb_t} \Big\| \nabla_{\xb_t} \log p_t(\xb_t\given \xb_0) \Big\|^2_2 + \E_{\xb_t} \Big\| \nabla_{\xb_t} \log p_t(\xb_t) \Big\|^2_2 \\
        & = \E_{\xb_0,\xb_t} \Big\| g(\xb_t, t) - \nabla_{\xb_t} \log p_t(\xb_t \given \xb_0) \Big\|^2_2 - \E_{\xb_0,\xb_t} \Big\| \nabla_{\xb_t} \log p_t(\xb_t\given \xb_0) \Big\|^2_2 + \E_{\xb_t} \Big\| \nabla_{\xb_t} \log p_t(\xb_t) \Big\|^2_2.
    \end{align*}
    This completes the proof.
\end{proof}

\subsection{Statistical results}
For the ease of presentation, we will leverage the following quantities:
\begin{align*}
    \+L(\pi, g) = \E_{s \sim d^\pi} \E_{t\sim U(T)} \E_{s_t \sim q_t(\cdot \given s)} \left[ \left\| g^e(s_t, t) - \nabla_{s_t} \log q_t(s_t \given s) \right\|_2^2 - \left\| g(s_t, t) - \nabla_{s_t} \log q_t(s_t \given s) \right\|_2^2 \right]
\end{align*}
By \Cref{lem:score-prob2}, the above is equivalent to the following by replacing the conditional score functions with the marginal ones for both terms in the expectation:
\begin{align*}
    \+L(\pi, g) = \E_{s \sim d^\pi} \E_{t\sim U(T)} \E_{s_t \sim q_t(\cdot \given s)} \left[ \left\| g^e(s_t, t) - \nabla_{s_t} \log p_t^\pi(s_t) \right\|_2^2 - \left\| g(s_t, t) - \nabla_{s_t} \log p_t^\pi(s_t) \right\|_2^2 \right].
    \numberthis\label{eq:L2}
\end{align*}
One can show that, fixing a policy $\pi$, we have 
\begin{equation}\label{eq:max-g}
    \max_g \+L(\pi, g) 
    = \E_{s \sim d^\pi} \E_{t\sim U(T)} \E_{s_t \sim q_t(\cdot \given s)} \left[ \left\| g^e(s_t, t) - \nabla_{s_t} \log p_t^\pi (s_t) \right\|_2^2  \right] 
    = \ell(\pi)
\end{equation}
where the first equality is by observing that the second term in Eq.~\eqref{eq:L2} can be minimized to zero by choosing $g = \nabla_{s_t} \log p_t^\pi(s_t)$ (recalling that we do not assume misspecification for score functions), and the second equality is by definition.

Bounding the difference between $\pi^{(1:K)}$ and $\pi^e$ is what we aim to do in the following lemma.
\begin{lemma}\label{lem:hellinger-bound}
    Under \Cref{asm:error,asm:diffusion}, assuming misspecification of RL (i.e., $\pi^e$ may not be in $\Pi$), we have
    \begin{align*}
        D_H^2(d^{\pi^{(1:K)}} , d^{\pi^e}) =  O \bigg( T \cdot \epsilon_\@{score}^2 + T \cdot \epsilon_\@{mis} + T \cdot \epsilon_\@{RL} + \frac{T \cdot \@{Regret}(K)}{K} + ( m  + d) \exp(-T) \bigg).
    \end{align*}
    Setting $T = \log (\frac{ m  + d}{\epsilon_\@{score}^2 + \epsilon_\@{RL} + \epsilon_\@{mis}})$ yields the following bound:
    \begin{align*}
        D_H^2(d^{\pi^{(1:K)}} , d^{\pi^e}) =  \w~{O} \bigg( \epsilon_\@{score}^2 + \epsilon_\@{mis} + \epsilon_\@{RL} + \frac{\@{Regret}(K)}{K} \bigg)
    \end{align*}
    where $\w~{O}$ hides logarithmic factors.
\end{lemma}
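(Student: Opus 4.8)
\textbf{Proof proposal for \Cref{lem:hellinger-bound}.}

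The plan is to reduce the Hellinger bound between $d^{\pi^{(1:K)}}$ and $d^{\pi^e}$ to a bound on the DS divergence $\ell(\pi^{(1:K)})$ (or rather its average over the iterates), and then invoke \Cref{lem:ds-hellinger-v2} with $P = d^{\pi^{(1:K)}}$ and $g = g^e$. Concretely, observe that $\ell(\pi)$ as defined in Eq.~\ref{eq:ideal_obj} is precisely the quantity that appears in the hypothesis of \Cref{lem:ds-hellinger-v2} when the ``score estimate'' is $g^e$ and the base distribution is $d^\pi$; and the reverse-process distribution $Q$ obtained by running the reverse SDE with $g^e$ as the score is, up to the discretization error already folded into $\epsilon_\@{score}$, close to $d^{\pi^e}$ whenever $g^e$ is an accurate estimate of $\nabla\log p^{\pi^e}_t$ (\Cref{it:expert}). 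So the first step is to argue, via the triangle inequality for Hellinger distance, that $D_H^2(d^{\pi^{(1:K)}}, d^{\pi^e}) \lesssim D_H^2(d^{\pi^{(1:K)}}, Q) + D_H^2(Q, d^{\pi^e})$, where the second term is $O(T\epsilon_\@{score}^2 + (m+d)\exp(-T))$ by the same Girsanov-plus-OU-convergence argument used inside the proof of \Cref{lem:ds-hellinger-v2}, applied to $P = d^{\pi^e}$ and the score estimate $g^e$. The first term is $O(T \cdot \ell(\pi^{(1:K)}) + (m+d)\exp(-T))$ directly by \Cref{lem:ds-hellinger-v2}.

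The core of the proof is therefore to bound $\ell(\pi^{(1:K)})$, which by linearity of $d^{\pi^{(1:K)}}$ in the mixture equals $\frac1K\sum_{k=1}^K \ell(\pi^{(k)})$. The plan here is to use the game structure: by Eq.~\eqref{eq:max-g}, $\ell(\pi^{(k)}) = \max_g \+L(\pi^{(k)}, g)$, and I want to show this is small on average. Split this via the no-regret guarantee on $g$ (\Cref{it:regret}) and the best-response / $\epsilon_\@{RL}$-optimality guarantee on $\pi$ (\Cref{it:rl}). Specifically, write $\frac1K\sum_k \ell(\pi^{(k)}) = \frac1K\sum_k \+L(\pi^{(k)}, g^\star_k)$ where $g^\star_k$ is the per-round maximizer, then compare to $\frac1K\sum_k \+L(\pi^{(k)}, g^{(k)})$ up to $\@{Regret}(K)/K$ using that the sequence $\{g^{(k)}\}$ (fit via FTL / DAgger on aggregated data) is no-regret against the square-loss functional — note $\+L$ differs from the score-matching loss only by the $\pi$-dependent but $g$-independent term $\|g^e - \nabla\log q_t\|^2$, so regret guarantees transfer. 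Then for each $k$, since $\pi^{(k)}$ is $\epsilon_\@{RL}$-optimal for the cost $c^{(k)}$ whose expectation under $d^\pi$ is exactly $\+L(\pi, g^{(k)})$, we get $\+L(\pi^{(k)}, g^{(k)}) \le \min_{\pi\in\Pi}\+L(\pi, g^{(k)}) + \epsilon_\@{RL} \le \+L(\pi^{\mathrm{mis}}, g^{(k)}) + \epsilon_\@{RL}$ for any fixed comparator $\pi^{\mathrm{mis}}\in\Pi$. Choosing $\pi^{\mathrm{mis}} = \argmin_{\pi\in\Pi}\ell(\pi)$ and bounding $\+L(\pi^{\mathrm{mis}}, g^{(k)}) \le \max_g \+L(\pi^{\mathrm{mis}}, g) = \ell(\pi^{\mathrm{mis}}) = \epsilon_\@{mis}$, we collect $\frac1K\sum_k \ell(\pi^{(k)}) \le \epsilon_\@{mis} + \epsilon_\@{RL} + \@{Regret}(K)/K$. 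Feeding this and the $\epsilon_\@{score}^2$ term into the Hellinger triangle inequality gives the first displayed bound; optimizing the free parameter $T$ by balancing $T\cdot(\text{errors})$ against $(m+d)\exp(-T)$ gives $T = \log\frac{m+d}{\epsilon_\@{score}^2+\epsilon_\@{RL}+\epsilon_\@{mis}}$ and the second, $\w~{O}(\cdot)$ bound.

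The main obstacle I anticipate is the bookkeeping around the two different forms of $\+L$ (the conditional-score form used in the algorithm and cost function, versus the marginal-score form in Eq.~\eqref{eq:L2}) and making sure the no-regret transfer is clean: one has to verify that the FTL regret on the \emph{score-matching} loss $\E\|g(s_t,t) - \nabla\log q_t(s_t\given s)\|^2$ (the object in \Cref{it:regret}) indeed controls the regret on $\+L(\cdot, g)$, which holds because the two objectives differ by a term independent of $g$, and also that the max over $g$ in \eqref{eq:max-g} is attained at $\nabla\log p^\pi_t$ thanks to the no-score-misspecification assumption. A secondary subtlety is that the comparator $\pi^{\mathrm{mis}}$ must be the \emph{same} policy for all $k$ for the argument to close, which is fine since $\epsilon_\@{mis} = \min_{\pi\in\Pi}\ell(\pi)$ is a fixed quantity, but it must be stated carefully. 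Everything else — the Girsanov bound, the OU convergence rate, data-processing for Hellinger — is quoted from \Cref{lem:ds-hellinger-v2} and its proof.
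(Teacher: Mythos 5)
Your overall architecture matches the paper's proof (triangle inequality through the reverse-process distribution $\^d^{\pi^e}$ generated from $g^e$, two invocations of \Cref{lem:ds-hellinger-v2}, a fixed comparator $\pi^{\mathrm{mis}}=\argmin_{\pi\in\Pi}\ell(\pi)$ for the RL step, and the choice of $T$), but the core step bounding the DS divergence of the mixture has a genuine gap. First, the claimed identity $\ell(\pi^{(1:K)}) = \frac1K\sum_{k}\ell(\pi^{(k)})$ is false in general: while $d^{\pi^{(1:K)}}$ (and hence each $p_t^{\pi^{(1:K)}}$) is the average of the per-iterate distributions, the score $\nabla\log p_t^{\pi^{(1:K)}}$ is not the average of the per-iterate scores, so only the inequality $\ell(\pi^{(1:K)}) \le \frac1K\sum_k\ell(\pi^{(k)})$ holds (via $\ell(\pi)=\max_g\+L(\pi,g)$ and swapping max with the average). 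Second, and more importantly, your bound on $\frac1K\sum_k\ell(\pi^{(k)}) = \frac1K\sum_k\+L(\pi^{(k)},g^\star_k)$ invokes \Cref{it:regret} against the \emph{per-round maximizers} $g^\star_k$ (which are the distinct scores $\nabla\log p_t^{\pi^{(k)}}$). The assumption only provides external regret against a single fixed $g$ in hindsight, i.e. it controls $\max_g\sum_k\+L(\pi^{(k)},g)$, not $\sum_k\max_g\+L(\pi^{(k)},g)$; the latter is strictly larger in general, and comparing to it would require a dynamic-regret-type guarantee that is not assumed. So the chain as written proves a statement stronger than the assumptions support.

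The repair is exactly the paper's route, and it also removes the need for the false equality: keep the maximization outside the sum. By linearity of $\+L(\cdot,g)$ (in its conditional-score form) in $d^\pi$, one has $\max_g\sum_{k=1}^K\+L(\pi^{(k)},g) = K\max_g\+L(\pi^{(1:K)},g) = K\,\ell(\pi^{(1:K)})$, where the last equality uses the no-misspecification assumption on $\Gcal$ (the mixture's score lies in $\Gcal$, which is the relevant fixed comparator for FTL on the aggregated data). This single-comparator quantity is precisely what \Cref{it:regret} controls, and combining with \Cref{it:rl} and $\+L(\pi^{\mathrm{mis}},g^{(k)})\le\ell(\pi^{\mathrm{mis}})=\epsilon_\@{mis}$ gives $\ell(\pi^{(1:K)})\le\epsilon_\@{mis}+\epsilon_\@{RL}+\@{Regret}(K)/K$. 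From there your remaining steps (the two Hellinger bounds of order $T\cdot\ell(\pi^{(1:K)})+(m+d)e^{-T}$ and $T\epsilon_\@{score}^2+(m+d)e^{-T}$, the triangle inequality with a factor of $2$, and tuning $T$) go through as you describe and coincide with the paper.
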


\begin{proof}[Proof of \Cref{lem:hellinger-bound}]
    \Cref{it:regret,it:rl} in \Cref{asm:error} implies the following:
\begin{equation}\label{eq:reg-asm}
    \max_g \sum_{k=1}^K \+L(\pi^{(k)}, g) \leq \sum_{k=1}^K \+L(\pi^{(k)}, g^{(k)}) + \@{Regret}(K);
\end{equation}
\begin{equation}\label{eq:rl-asm}
    \+L(\pi^{(k)}, g^{(k)}) - \min_{\pi\in\Pi} \+L(\pi, g^{(k)}) \leq \epsilon_\@{RL}.
\end{equation}
Combining \Cref{eq:reg-asm} and \Cref{eq:rl-asm}, we have
\begin{align*}
    \max_g \sum_{k=1}^K \+L(\pi^{(k)}, g) \leq \sum_{k=1}^K \min_{\pi\in\Pi} \+L(\pi, g^{(k)}) + K\cdot \epsilon_\@{RL} + \@{Regret}(K).
\end{align*}
Now the right-hand side can be upper bounded by replacing all $g^{(k)}$ with maximization over $g$:
\begin{align*}
    \max_g \sum_{k=1}^K \+L(\pi^{(k)}, g)
    &\leq \sum_{k=1}^K \min_{\pi\in\Pi} \max_g \+L(\pi, g) + K\cdot \epsilon_\@{RL} + \@{Regret}(K)\\
    &= \sum_{k=1}^K \min_{\pi\in\Pi} \ell(\pi) + K\cdot \epsilon_\@{RL} + \@{Regret}(K) \tag{by Eq. \ref{eq:max-g}} \\
    &= K \cdot \epsilon_\@{mis} + K\cdot \epsilon_\@{RL} + \@{Regret}(K) \numberthis\label{eq:weird}
\end{align*}
Note that the left-hand side is equivalent to the following
\begin{align*}
    \max_g \sum_{k=1}^K \+L(\pi^{(k)}, g) 
    = K \cdot \max_g \+L(\pi^{(1:K)}, g) 
    = K \cdot \ell(\pi^{(1:K)}).
\end{align*}
where the first equality is by the definition of $\+L$. 
Inserting it back to the previous inequality yields an upper bound on $\ell(\pi^{(1:K)})$:
\begin{align*}
    \ell(\pi^{(1:K)}) \leq \epsilon_\@{mis} + \epsilon_\@{RL} + \frac{\@{Regret}(K)}{K}
\end{align*}
Now we define a new distribution to facilitate the analysis: let $\^d^{\pi^e}$ denote the distribution induced by the reverse diffusion process starting from $\+N(0,I)$ by treating $g^e$ as the ``score function''. Specifically, $\^d^{\pi^e}$ is the distribution of $\-s_T$ of the following SDE:
\begin{align*}
    \d \-s_t = & (\-s_t + 2 g^e(\-s_t, T-t)) \d t + \sqrt{2} \d B_t,\, \-s_0 \sim \+N(0,I)
\end{align*}
where we recall that $B_t$ is the standard Brownian motion. We can see that $\^d^{\pi^e}$ is approximating $d^{\pi^e}$. Then, we invoke \Cref{lem:ds-hellinger-v2} and get
\begin{align*}
    D_H^2(d^{\pi^{(1:K)}}, \^d^{\pi^e}) = O \bigg( T \cdot \epsilon_\@{mis} + T \cdot \epsilon_\@{RL} + \frac{T \cdot \@{Regret}(K)}{K} + ( m  + d) \exp(-T) \bigg).
\end{align*}
Now it remains to bound $D_H^2(d^{\pi^e}, \^d^{\pi^e})$ so that we can apply the triangle inequality to get an upper bound on $D_H^2(d^{\pi^{(1:K)}}, d^{\pi^e})$.
Since $\ell(\pi^e) \leq \epsilon_\@{score}^2$ (\Cref{it:expert} in \Cref{asm:error}), we invoke \Cref{lem:ds-hellinger-v2} again and immediately get 
\begin{align*}
    D_H^2(d^{\pi^e}, \^d^{\pi^e}) \leq O \bigg( T \cdot \epsilon_\@{score}^2 + ( m  + d) \exp(-T) \bigg).
\end{align*}
We conclude the proof by putting them together via triangle inequality:
\begin{align*}
    D_H^2(d^{\pi^{(1:K)}}, d^{\pi^e}) 
    & \leq 2 D_H^2(d^{\pi^{(1:K)}}, \^d^{\pi^e}) + 2 D_H^2(\^d^{\pi^e}, d^{\pi^e}) \\
    & = O \bigg( T \cdot \epsilon_\@{mis} +  T \cdot \epsilon_\@{score}^2 + T \cdot \epsilon_\@{RL} + \frac{T \cdot \@{Regret}(K)}{K} + ( m  + d) \exp(-T) \bigg).
\end{align*}
\end{proof}
\begin{lemma}\label{lem:hellinger-bound2}
    Under the same condition as \Cref{lem:hellinger-bound} but assuming no misspecification (i.e., $\pi^e \in \Pi$), we have 
    \begin{align*}
        D_H^2(d^{\pi^{(1:K)}} , d^{\pi^e}) =  \w~{O} \bigg( \epsilon_\@{score}^2 + \epsilon_\@{RL} + \frac{\@{Regret}(K)}{K} \bigg)
    \end{align*}
\end{lemma}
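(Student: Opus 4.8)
The plan is to follow the proof of \Cref{lem:hellinger-bound} essentially verbatim, but to exploit the absence of RL misspecification to eliminate the $\epsilon_\@{mis}$ term. Recall from the proof of \Cref{lem:hellinger-bound} that, after combining the regret assumption (\Cref{it:regret}) and the RL assumption (\Cref{it:rl}) and replacing each $g^{(k)}$ with a maximization over $g$, we arrived at the key inequality
\begin{align*}
    K\cdot\ell(\pi^{(1:K)}) = \max_g \sum_{k=1}^K \+L(\pi^{(k)}, g) \le \sum_{k=1}^K \min_{\pi\in\Pi}\ell(\pi) + K\cdot\epsilon_\@{RL} + \@{Regret}(K).
\end{align*}
First I would observe that when $\pi^e\in\Pi$, the term $\min_{\pi\in\Pi}\ell(\pi)$ is bounded above by $\ell(\pi^e)$, and by \Cref{it:expert} in \Cref{asm:error} together with Eq.~\ref{eq:max-g} (which identifies $\ell(\pi^e) = \max_g\+L(\pi^e,g)$ with the score-matching error of $g^e$ against $\nabla\log p_t^{\pi^e}$) we have $\ell(\pi^e)\le\epsilon_\@{score}^2$. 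Hence $\min_{\pi\in\Pi}\ell(\pi)\le\epsilon_\@{score}^2$, which replaces the role played by $\epsilon_\@{mis}$ in the general case. This yields
\begin{align*}
    \ell(\pi^{(1:K)}) \le \epsilon_\@{score}^2 + \epsilon_\@{RL} + \frac{\@{Regret}(K)}{K}.
\end{align*}

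Next I would introduce the same auxiliary distribution $\^d^{\pi^e}$ as in \Cref{lem:hellinger-bound} --- the law of $\-s_T$ under the reverse diffusion SDE started at $\+N(0,I)$ with $g^e$ in place of the true score --- and invoke \Cref{lem:ds-hellinger-v2} twice. Applying it with $P = d^{\pi^{(1:K)}}$ and $g = g^e$ (noting that the DS-divergence-type quantity $\ell(\pi^{(1:K)})$ is exactly the hypothesis of that lemma, and that \Cref{asm:diffusion} supplies the Lipschitz and second-moment conditions \ref{it:a} and \ref{it:b}) gives
\begin{align*}
    D_H^2(d^{\pi^{(1:K)}}, \^d^{\pi^e}) = O\bigg(T\cdot\epsilon_\@{score}^2 + T\cdot\epsilon_\@{RL} + \frac{T\cdot\@{Regret}(K)}{K} + (m+d)\exp(-T)\bigg),
\end{align*}
and applying it again with $P = d^{\pi^e}$ and $g = g^e$, using $\ell(\pi^e)\le\epsilon_\@{score}^2$, gives $D_H^2(d^{\pi^e},\^d^{\pi^e}) = O(T\cdot\epsilon_\@{score}^2 + (m+d)\exp(-T))$. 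A triangle inequality for the Hellinger distance (in the squared form $D_H^2(a,c)\le 2D_H^2(a,b)+2D_H^2(b,c)$) then combines these into a bound on $D_H^2(d^{\pi^{(1:K)}}, d^{\pi^e})$ of the same shape. Finally I would choose $T = \log\big(\frac{m+d}{\epsilon_\@{score}^2 + \epsilon_\@{RL}}\big)$ so that the exponential tail $(m+d)\exp(-T)$ is balanced against $\epsilon_\@{score}^2 + \epsilon_\@{RL}$, which absorbs the $T$ factors into logarithmic factors hidden by $\w~O$ and produces the claimed bound.

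There is no real obstacle here beyond bookkeeping: this lemma is strictly easier than \Cref{lem:hellinger-bound}, since dropping the misspecification assumption only means we get to use $\pi^e$ itself as the comparator in $\min_{\pi\in\Pi}\ell(\pi)$ rather than an abstract minimizer, collapsing $\epsilon_\@{mis}$ into $\epsilon_\@{score}^2$. The one point that deserves care is making sure the hypothesis of \Cref{lem:ds-hellinger-v2} is met for the mixture policy $\pi^{(1:K)}$ --- that is, that $\ell(\pi^{(1:K)})$ really is the averaged score-matching quantity appearing there --- which follows from the identity $\max_g\sum_{k=1}^K\+L(\pi^{(k)},g) = K\max_g\+L(\pi^{(1:K)},g) = K\ell(\pi^{(1:K)})$ used in \Cref{lem:hellinger-bound}, together with \Cref{asm:diffusion} holding for all policies including mixtures. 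So the proof is essentially a one-line modification: replace ``$\min_{\pi\in\Pi}\ell(\pi) = \epsilon_\@{mis}$'' by ``$\min_{\pi\in\Pi}\ell(\pi)\le\ell(\pi^e)\le\epsilon_\@{score}^2$'' and carry the rest through unchanged.
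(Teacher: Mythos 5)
Your proposal is correct and follows essentially the same route as the paper: the paper's own proof of \Cref{lem:hellinger-bound2} simply notes that when $\pi^e\in\Pi$, the term $\min_{\pi\in\Pi}\ell(\pi)$ in Eq.~\ref{eq:weird} can be bounded by $\ell(\pi^e)\le\epsilon_\@{score}^2$ via \Cref{it:expert} of \Cref{asm:error}, with the remainder of the argument (auxiliary distribution $\^d^{\pi^e}$, two applications of \Cref{lem:ds-hellinger-v2}, triangle inequality, and the choice of $T$) carried over unchanged from \Cref{lem:hellinger-bound}. Your more detailed write-out of those carried-over steps is consistent with the paper's proof of \Cref{lem:hellinger-bound}.
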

\begin{proof}[Proof of \Cref{lem:hellinger-bound2}]
    The proof is almost identical to that of \Cref{lem:hellinger-bound} except that, when there is no misspecification, $\min_{\pi\in\Pi} \ell(\pi)$ in Eq.~\ref{eq:weird} can be simply upper bounded by $\ell(\pi^e)$, which is bounded by $\epsilon_\@{score}^2$ (\Cref{it:expert} in \Cref{asm:error}). The rest of the proof is the same.
\end{proof}

\subsection{Main proof}

In this section, we provide the main proof of \Cref{thm:main}. First, the following is by definition of value functions:
\begin{align*}
    V^{\pi^{(1:K)}} - V^{\pi^e}
    & = H \cdot \left( \E_{s,a \sim d^{\pi^{(1:K)}}} c^\star(s,a)  - \E_{s,a \sim d^{\pi^e}} c^\star(s,a) \right)
\end{align*}
By \Cref{lem:triangle}, it is bounded by
\begin{align*}
    & V^{\pi^{(1:K)}} - V^{\pi^e} \numberthis\label{eq:var-lemma} \\
    & \leq H \cdot \left( 8 \sqrt{\@{Var}\left(\E_{s,a \sim d^{\pi^e}} c^\star(s,a) \right) \cdot D^2_H(c^\star\sim d^{\pi^{(1:K)}}, c^\star\sim d^{\pi^e}) }
    + 20 D^2_H(c^\star\sim d^{\pi^{(1:K)}}, c^\star\sim d^{\pi^e}) \right) \\
    & = 8 \sqrt{\@{Var}^{\pi^e} \cdot D^2_H(c^\star\sim d^{\pi^{(1:K)}}, c^\star\sim d^{\pi^e}) }
    + 20 H D^2_H(c^\star\sim d^{\pi^{(1:K)}}, c^\star\sim d^{\pi^e})
\end{align*}
where $c^\star\sim d^{\pi^{(1:K)}}$ (and $c^\star \sim d^{\pi^e}$) denotes the distribution of cost rather than the distribution of $d^{\pi^{(1:K)}}$ (and $d^{\pi^e}$) itself. Now we apply the data processing inequality and get 
\begin{align*}
    V^{\pi^{(1:K)}} - V^{\pi^e}
    & \leq 8 \sqrt{\@{Var}^{\pi^e} \cdot D_H^2(d^{\pi^{(1:K)}} , d^{\pi^e}) }
    + 20 H D_H^2(d^{\pi^{(1:K)}} , d^{\pi^e}).
\end{align*}
Plugging in the bound from \Cref{lem:hellinger-bound} if there is misspecification error or \Cref{lem:hellinger-bound2} otherwise, we get
\begin{align*}
    V^{\pi^{(1:K)}} - V^{\pi^e}
    & = \w~{O} \bigg( \sqrt{ \@{Var}^{\pi^e} \cdot \epsilon} + \epsilon H \bigg).
\end{align*}
Similarly, we can derive the second half of the second-order bound (with $V^{\pi^{(1:K)}}$ replaced with $V^{\pi^e}$ inside the square root) by applying \Cref{lem:triangle} to Eq.~\ref{eq:var-lemma} in the other direction. These conclude the proof of the second-order bound. 

The first-order bound is a direct consequence of the second-order bound as shown in Theorem 2.1 in \citet{wang2023benefits}.

\section{Additional Experimental Results}\label{sec:exp-detail}

In this section, we present additional experimental results: in \Cref{sec:learn-sa}, we show the results of learning from state-action demonstrations; in \Cref{sec:num-exp-demos}, we analyze the impact of the number of expert demonstrations on performance; in \Cref{sec:exp-linear}, we study the expressiveness of diffusion models over other methods; in \Cref{sec:unnormalized}, we include plots of unnormalized returns for reference; finally, in \Cref{sec:impl-detail}, we provide detailed implementation details. 

In addition, visual demonstrations of the experiments (such as time-lapse images) are provided in \Cref{fig:demo} for better illustration.

\begin{figure}[ht]
    \begin{center}
    \includegraphics[width=\columnwidth]{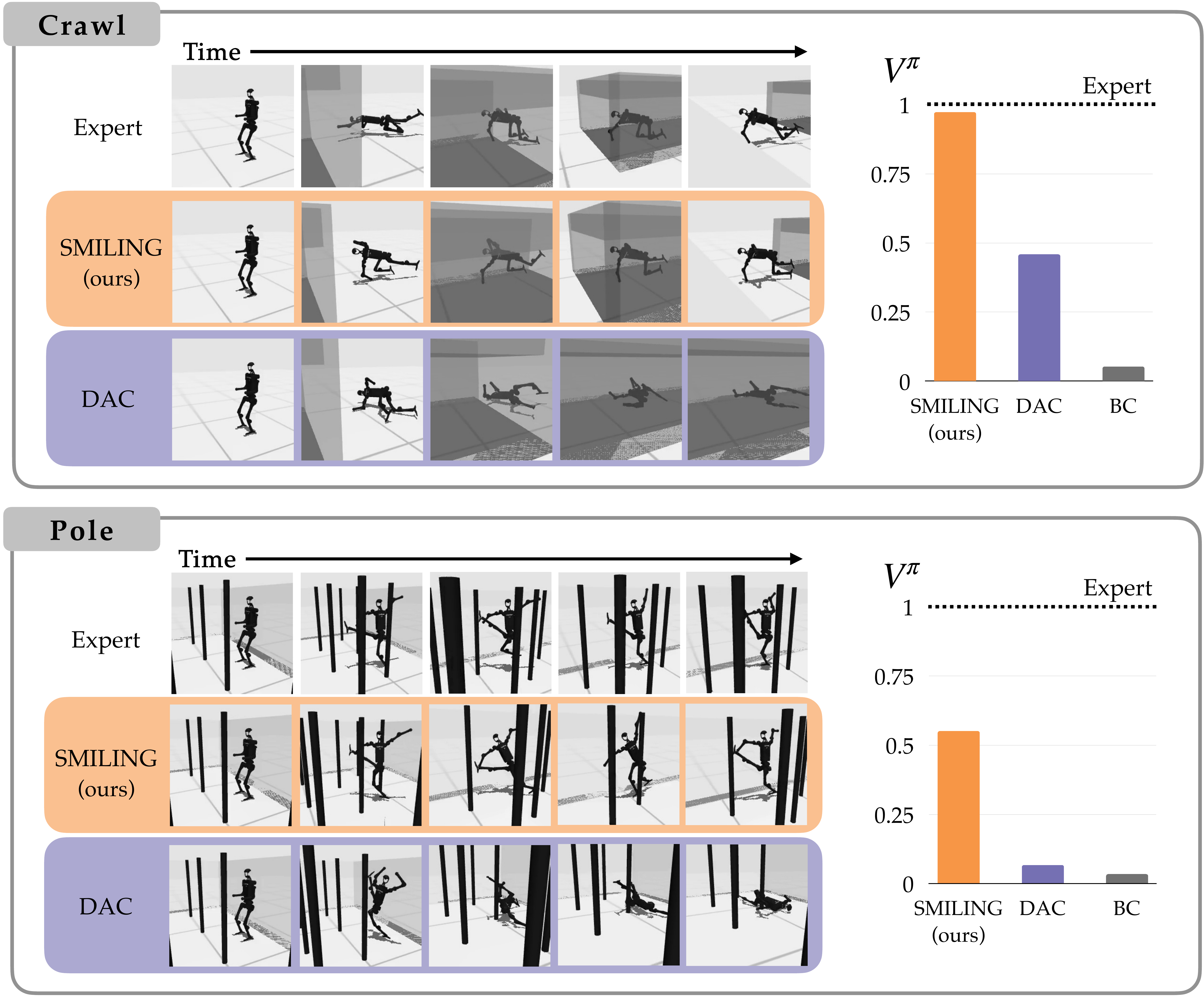}
    \end{center}
    \caption  { \textbf{Whole-body humanoid control via IL from state alone}. 
    The two panels illustrate the \texttt{crawl} and \texttt{pole} tasks, respectively.
    In both tasks, we show the time-lapse frames of the expert policy and the policies learned by our method (\algname{}) and DAC after 3M training steps. 
    In the \texttt{crawl} task, the goal is to crawl through a grey tunnel, where both the expert and ours succeed and the crawling movements are similar. However, DAC collapses and fails to complete the task. 
    In the \texttt{pole} task, the goal is to travel through a dense forest of poles. Ours successfully navigates through the poles, though with less stability than the expert, while DAC collapses to the ground and cannot move. 
    The bar graphs on the right show normalized policy performance, where \algname{} significantly outperforms DAC and Behavioral Cloning (BC) in both tasks, approaching expert performance in \texttt{crawl}. Note that BC uses expert actions, while DAC and \algname{} learn from states alone.}
    \label{fig:demo}
    \vspace{-1em}
\end{figure}

\subsection{Learning from State-action Demonstrations}\label{sec:learn-sa}

Our method can be easily extended to learning from state-action demonstrations by appending the action vector to the state vector for both training and computing rewards. Hence, we also explore its performance in such a setting. The experimental setup is identical to that of learning from state-only data. The training curves are presented in \Cref{fig:exp-sa}. The results are highly consistent with those from the state-only setting, indicating that the performance gap is stable whether training on state data alone or state-action pairs. This also demonstrates that \algname{} is robust to different data types.
\begin{figure}[htb]
    \vspace{1em}
    \begin{center}
    \includegraphics[width=\textwidth]{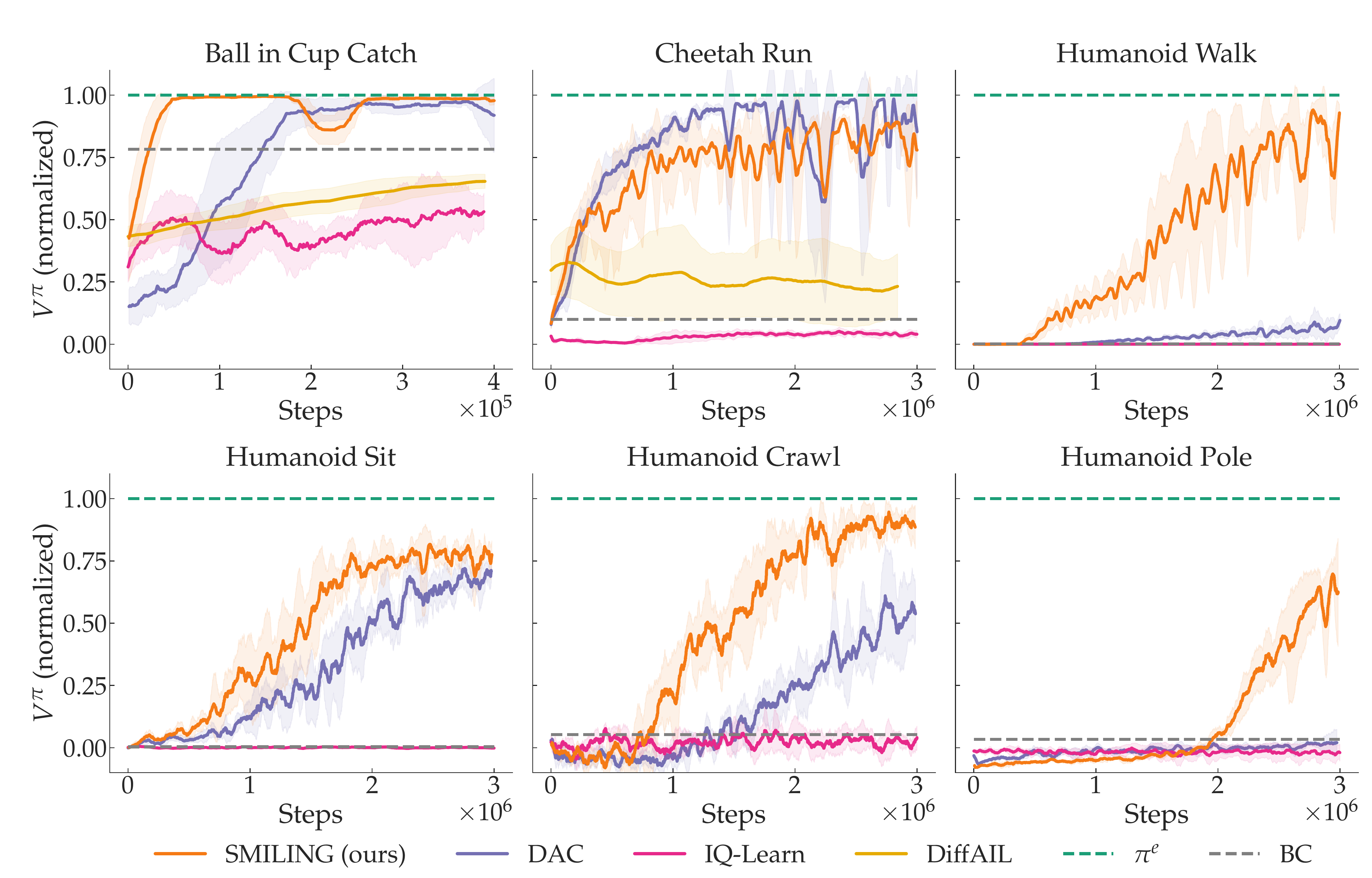}
    \end{center}
    \vspace{-1em}
    \caption{Learning curves for learning from state-action data across five random seeds. The x-axis corresponds to the number of environment steps (also the number of policy updates). The y-axis is normalized such that the expert performance is one and the random policy is zero. The results are consistent with those from the state-only setting.}
    \label{fig:exp-sa}
    \vspace{-1em}
\end{figure}

\subsection{On the Number of Expert Demonstrations}\label{sec:num-exp-demos}

As we established in \Cref{sec:theory}, our algorithm can have better sample complexity in terms of expert demonstrations. To empirically verify this, we conduct an experiment with varying numbers of expert demonstrations. We train \algname{} and DAC with 1K, 2K, 5K, 10K, 25K, and 125K expert states on \texttt{humanoid-walk} and compare the results.  The relationship between the number of expert demonstrations and the final performance (after 3M training steps) is shown in \Cref{fig:expert-demo-point} and the training curves of both algorithms are shown in \Cref{fig:expert-demo}. 
\begin{figure}[htb]
    \begin{center}
    \includegraphics[width=0.415\textwidth]{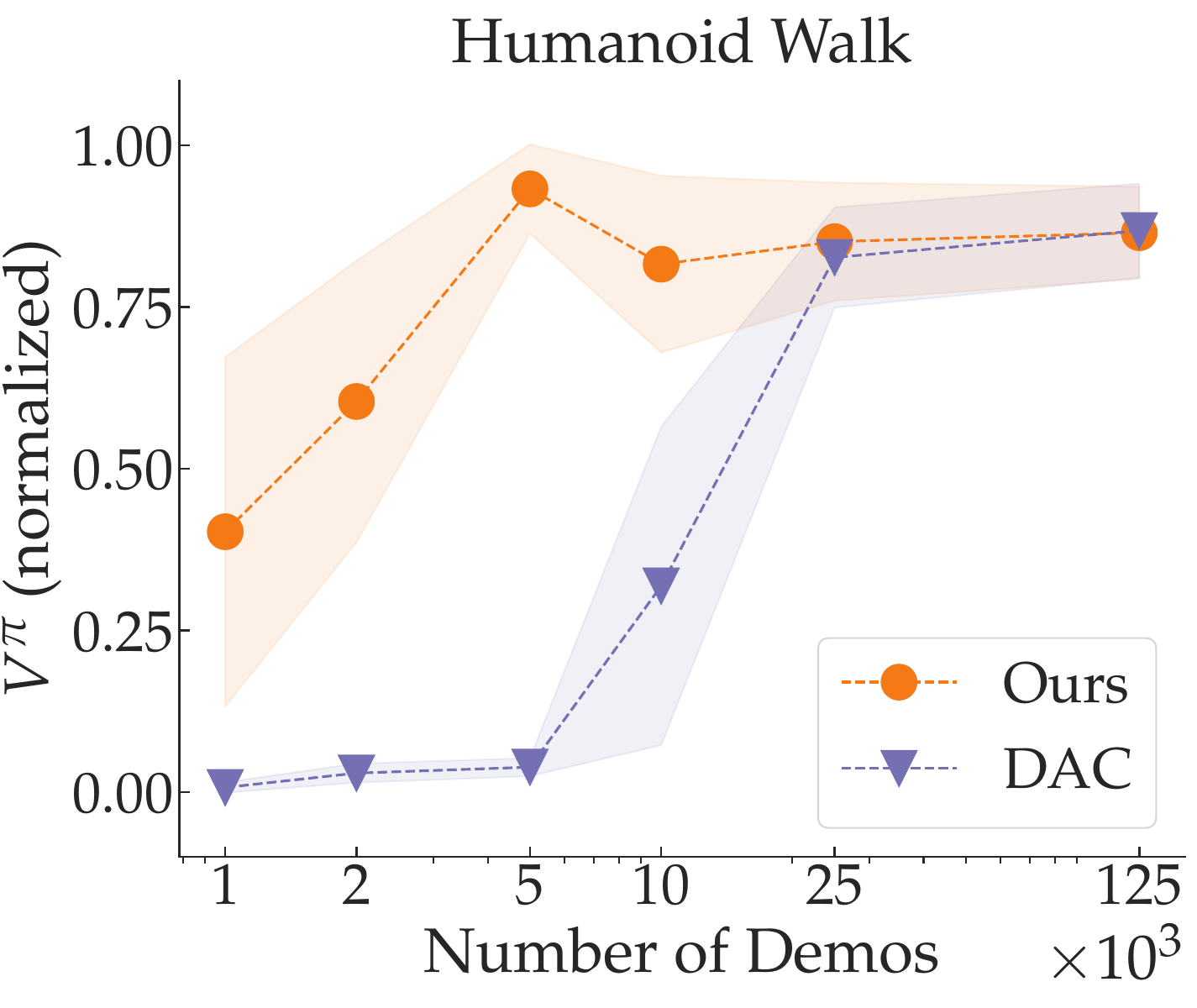}
    \end{center}
    \vspace{-1em}
    \caption{Relationship between the number of expert demonstrations and the final performance. The x-axis corresponds to the number of expert states. The y-axis is the final performance that is computed as the average of the last 100 training epochs. Each point is the average of five random seeds.}
    \label{fig:expert-demo-point}
    \vspace{-1em}
\end{figure}
From \Cref{fig:expert-demo-point}, we observe that \algname{} dominates DAC across all expert demonstration sizes. In particular, \algname{} is able to approach expert-level performance with 5K expert states, while DAC requires 25K to achieve similar performance. This suggests that \algname{} is more efficient than DAC in utilizing expert demonstrations.
\begin{figure}[htb]
    \begin{center}
    \includegraphics[width=0.83\textwidth]{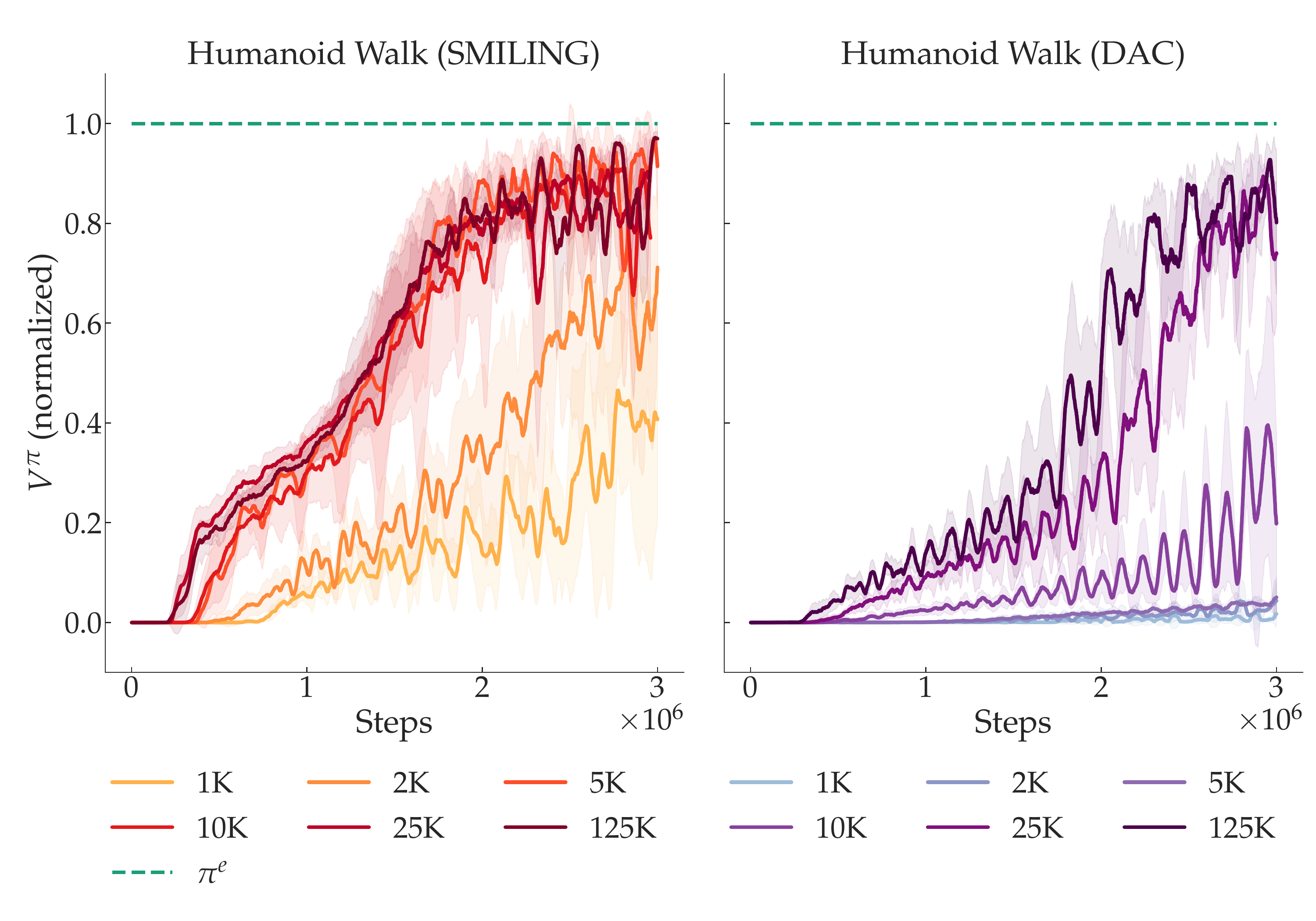}
    \end{center}
    \vspace{-1em}
    \caption{The training curves of \algname{} (left) and DAC (right) with varying numbers of expert demonstrations. The x-axis corresponds to the number of environment steps (also the number of policy updates). The y-axis is the normalized return.}
    \label{fig:expert-demo}
    \vspace{-1em}
\end{figure}

\subsection{Expressiveness of Diffusion Models}\label{sec:exp-linear}

In \Cref{sec:linear}, we argued that score matching is more expressive than discriminator-based methods, which may explain why our approach generally outperforms DAC in previous experiments. To further support this argument, we conducted an ablation study on the \texttt{cheetah-run} task where our method previously showed a slower convergence compared to DAC.
Now we remove the activation functions in both the discriminator of DAC and the score function of our method, making them \textit{purely linear}, and then re-run the experiments.
The results are in \Cref{fig:exp-linear}. Different from the previous experiments where DAC outperformed our method, in this case, DAC's performance degrades notably by showing slower convergence and significantly higher variance. In comparison, our method remains more effective and outperforms DAC clearly.

    \begin{figure}[h]
        \vspace{1em}
    \begin{center}
    \includegraphics[width=0.395\columnwidth]{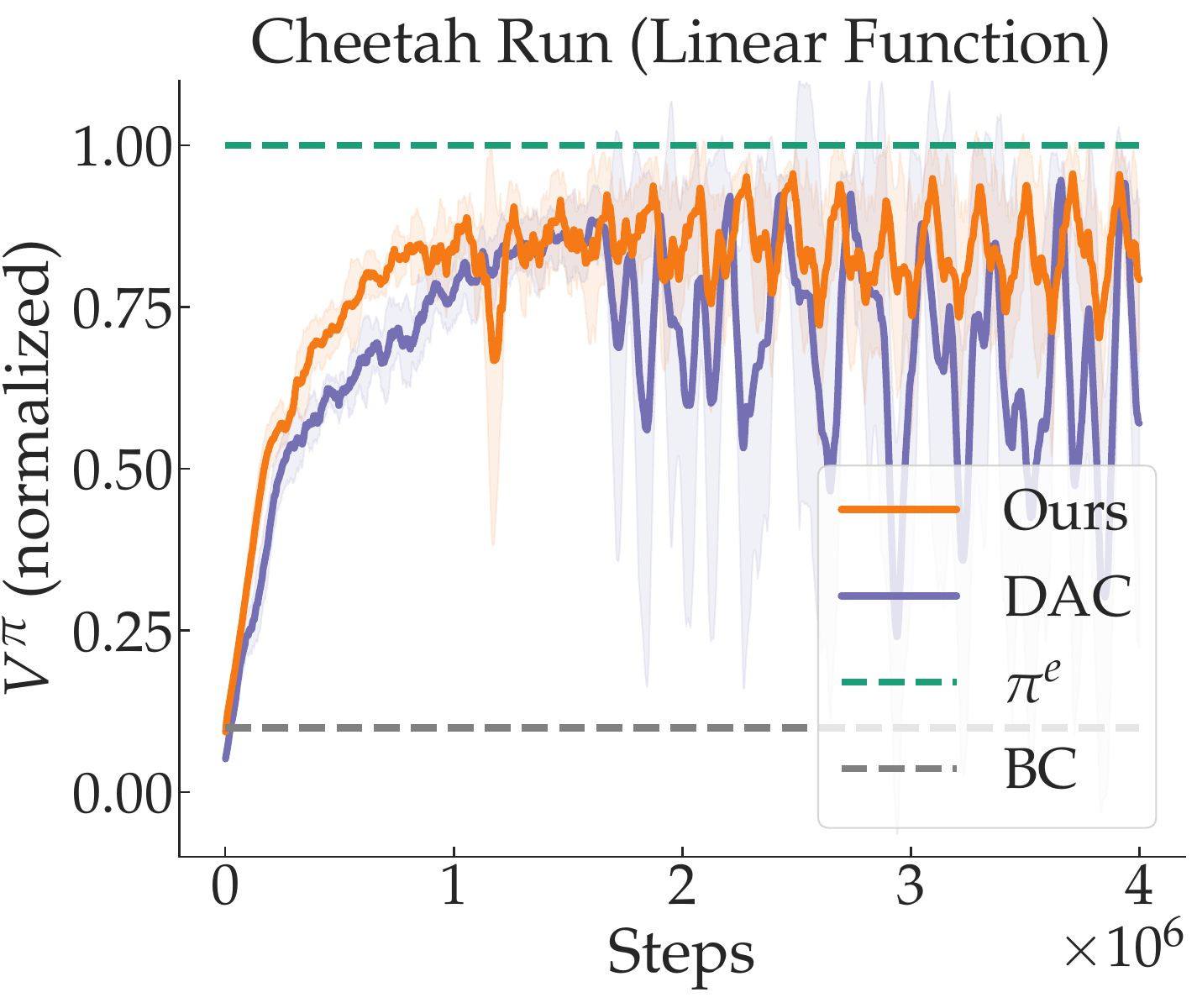}
    \end{center}
    \vspace{-1em}
    \caption{Learning curves with linear functions. Compared to using MLPs, DAC shows a significant decline in performance of slower convergence and higher variance, while ours remains more effective.}
    \label{fig:exp-linear}
    \end{figure}
We attribute the performance degradation of DAC to the limited expressiveness of the discriminator, which results in mode collapse—a well-known issue in discriminator-based methods, 
originally identified in GANs. In such methods, if the discriminator's performance falls significantly behind the generator (e.g., due to insufficient training or a lack of discriminator expressiveness, which is the case here), the generator will exploit this gap and produce single-mode data
that can easily fool the discriminator. In this case, the generator is falling into a local minima. 
As the discriminator gradually catches up and learns to identify this data, the generator will discover new data to fool the updated discriminator, falling into another local minimum. This perpetual cycle of chasing leads to continuous oscillations in the training process. We conjecture this is the exact reason for DAC's instability in this experiments---With the discriminator reduced to a purely linear function, its expressiveness is significantly weakened, allowing the RL policy to exploit it easily, and thus a perpetual chasing emerges. However, our method can remain more effective even with linear score functions, which aligns with our hypothesis in \Cref{sec:linear}.%

\subsection{Plots with Unnormalized Returns}\label{sec:unnormalized}

We provide the unnormalized-return versions (i.e., with the y-axis not normalized) of \Cref{fig:exp-state,fig:exp-sa} in \Cref{fig:exp-state-raw,fig:exp-sa-raw} for reference.

\begin{figure}[htb]
    \begin{center}
    \includegraphics[width=\columnwidth]{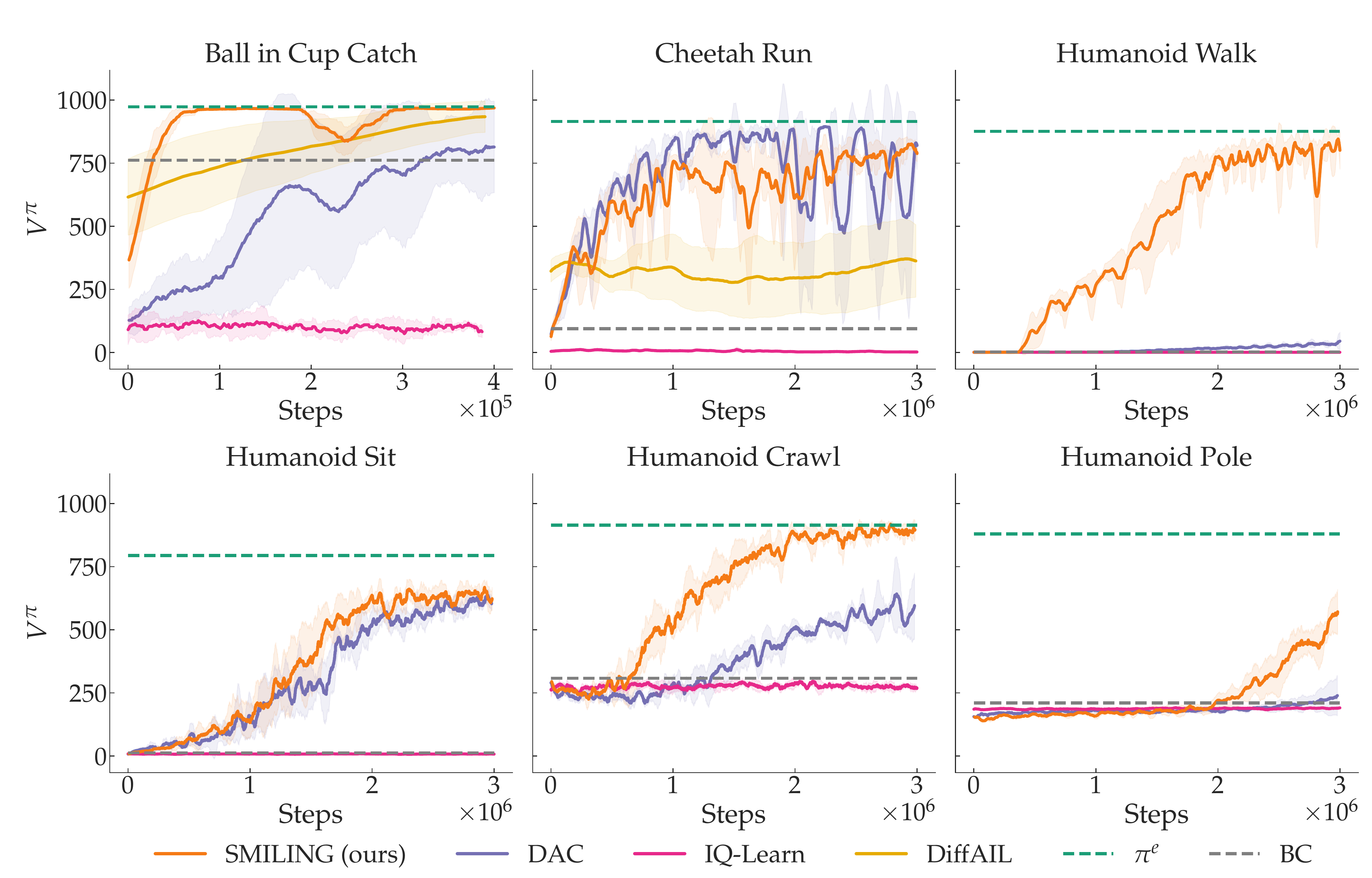}
    \end{center}
    \caption{Same curves as in \Cref{fig:exp-state} but with unnormalized return $V^\pi$ on the y-axis.}
    \label{fig:exp-state-raw}
\end{figure}

\begin{figure}[htb]
    \begin{center}
    \includegraphics[width=\textwidth]{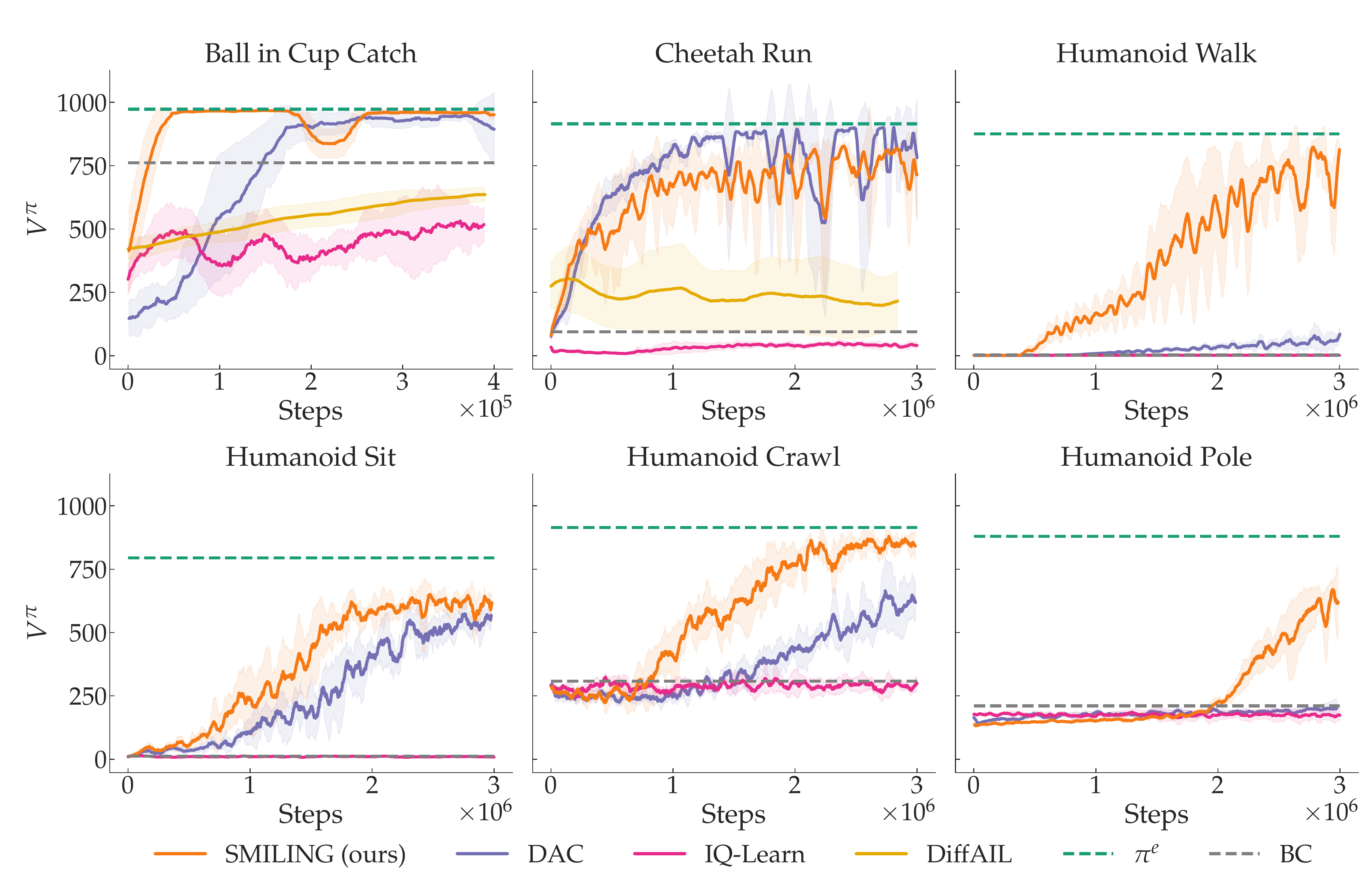}
    \end{center}
    \caption{Same curves as in \Cref{fig:exp-sa} but with unnormalized return $V^\pi$ on the y-axis.}
    \label{fig:exp-sa-raw}
\end{figure}

\subsection{Implementation Details}\label{sec:impl-detail}

We summarize the difficulties of the tasks evaluated in our experiments in \Cref{fig:env-difficulty}.

\noindent\textbf{Implementation of RL Algorithms.}  
We use the SAC implementation from \citet{pytorch_sac} and the DreamerV3 implementation from \citet{sferrazza2024humanoidbench} without modifying their hyperparameters. The only change we made is substituting their ground-truth cost (reward) functions with our cost function. For example, whenever the SAC agent samples a batch of transitions from the replay buffer, we replace the corresponding cost values with those defined in our algorithm. We do the same for DreamerV3.

\noindent\textbf{Expert Training and Demonstration Collection.}
We use the aforementioned RL algorithms (SAC for DMC tasks and DreamerV3 for HumanoidBench tasks) to train the expert policies. We did not change the hyperparameters for these algorithms, making them the same as those provided in their original implementations in~\citet{pytorch_sac,sferrazza2024humanoidbench}. In other words, these algorithms were directly applied to their respective benchmarks without modification.

The RL algorithms were run until the performance plateaued, with the required number of training steps varying across tasks: \texttt{ball-in-cup-catch} (0.1M steps), \texttt{cheetah-run} (4.8M steps), \texttt{humanoid-walk} (3.4M steps), \texttt{humanoid-sit} (2.7M steps), \texttt{humanoid-crawl} (5.4M steps), and \texttt{humanoid-pole} (16M steps). The resulting expert policies  achieve the following average total return: 972.85 for \texttt{ball-in-cup-catch}, 915.16 for \texttt{cheetah-run}, 875.61 for \texttt{humanoid-walk}, 794.12 for \texttt{humanoid-sit}, 914.31 for \texttt{humanoid-crawl}, and 879.48 for \texttt{humanoid-pole}.

Once the expert training is complete, we generate expert demonstrations by running the learned expert policy for five episodes for each task. Since each episode has a fixed horizon of 1K steps, the resulting dataset contains $5 \times 1,000 = 5,000$ samples per task.

\noindent\textbf{Implementation of Our Method and DAC.}
We made efforts to align the implementation details of both algorithms to ensure a fair comparison. A comprehensive list of hyperparameters is provided in \Cref{tab:para}. We perform one update of the score function in our method and the discriminator in DAC per 1K RL steps. The expert diffusion model is trained for 4K epochs using 5K expert states (and actions, if learning from state-action pairs).

\noindent\textbf{Implementation of IQ-Learn.} 
We used the official implementation from \citet{garg2021iq}. We note that, unlike \algname{} and DAC that can be built on any RL algorithm, IQ-Learn is exclusively tied to SAC. This makes it unclear how to implement it on top of DreamerV3, as we did for \algname{} and DAC. Therefore, we adhered to their SAC implementation to run all experiments including the HumanoidBench tasks.

\noindent\textbf{Implementation of DiffAIL.}
We used the official implementation provided by \citet{wang2024diffail} and made sure their neural networks have same size as ours. However, when applying their code to the tasks involving humanoids, we encountered numerical issues: the loss and weights of the neural networks diverged to infinity. We suspect this is due to numerical instability in their implementation, which becomes pronounced in the high-dimensional feature space of humanoid tasks.

    \begin{table}[H]
    \caption{List of environments (``DMC'' = DeepMind Control Suite, ``HB'' = HumanoidBench).}
    \label{fig:env-difficulty}
    \begin{center}
    \begin{tabular}{lcc}
    \toprule
    \multicolumn{1}{c}{\bf Task}  & \multicolumn{1}{c}{\bf Difficulty} & \multicolumn{1}{c}{\bf Benchmark}
    \\ \midrule
    Ball in Cup Catch         & Easy  & DMC \\
    Cheetah Run             & Medium  & DMC \\
    Humanoid Walk             & Hard  & DMC \\
    Humanoid Sit             & Hard  & HB \\
    Humanoid Crawl             & Hard  & HB \\
    Humanoid Pole             & Hard  & HB \\
    \bottomrule
    \end{tabular}
    \end{center}
    \end{table}

\begin{table}[H]
    \caption{Hyperparameters of our method and DAC. Shared columns indicate same hyperparameters.}
    \label{tab:para}
    \begin{center}
    \begin{tabular}{lcc}
    \toprule
    \multicolumn{1}{c}{\bf Hyperparameter}  & \multicolumn{1}{c}{\bf Ours} & \multicolumn{1}{c}{\bf DAC}
    \\ \midrule
    Neural Net & \multicolumn{2}{c}{MLP (1 hidden layer w/ 256 units)} \\
    Learning rate & \multicolumn{2}{c}{$5\times 10^{-3}$} \\
    Samples used per update & \multicolumn{2}{c}{100,000} \\
    Batch size & \multicolumn{2}{c}{1,024} \\
    Discretization steps & 5,000 & N/A \\
    Time embedding size & 16 & N/A \\
    \bottomrule
    \end{tabular}
    \end{center}
\end{table}

\end{document}